\documentclass{article} 
\usepackage{iclr2027_conference,times}

\usepackage{amsmath,amsfonts,bm}

\def\eqref#1{equation~\ref{#1}}

\def\1{\bm{1}}

\def\vs{{\bm{s}}}

\def\vx{{\bm{x}}}
\def\vy{{\bm{y}}}

\DeclareMathAlphabet{\mathsfit}{\encodingdefault}{\sfdefault}{m}{sl}
\SetMathAlphabet{\mathsfit}{bold}{\encodingdefault}{\sfdefault}{bx}{n}

\usepackage{hyperref}
\usepackage{url}
\usepackage{graphicx}
\usepackage{enumitem}
\usepackage{booktabs}
\usepackage{tabularx}
\usepackage{array}
\usepackage{xcolor}
\usepackage{colortbl}
\usepackage{threeparttable}
\usepackage{subcaption}
\usepackage{amssymb}
\usepackage{amsthm}
\usepackage{amsmath}
\newtheorem{proposition}{Proposition}
\newtheorem*{proposition*}{Proposition}

\usepackage{wrapfig}

\usepackage{adjustbox}
\usepackage{makecell}
\usepackage{multirow}

\definecolor{linkgray}{HTML}{777777}

\title{Not Every Token Is Worth Distilling:\\ Selective Supervision for Direct-OPD}

\newcommand{\githubicon}{%
  \raisebox{-0.15em}{%
    \includegraphics[height=1.05em]{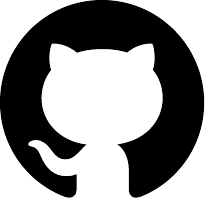}%
  }%
}

\newcommand{\hficon}{%
  \raisebox{-0.18em}{%
    \includegraphics[height=1.08em]{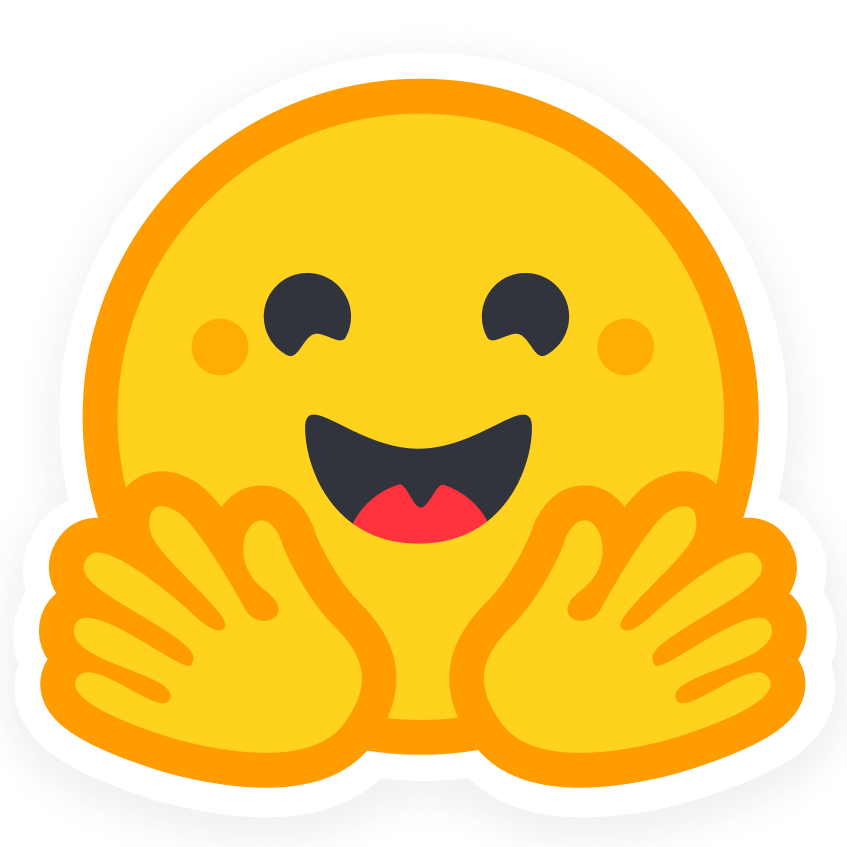}%
  }%
}

\iclrfinalcopy 
\begin{document}

\author{Yibo Zhao\thanks{Equal contribution. The first two authors may list their names
in either order on their CVs.},\quad Zixuan Yang\footnotemark[1],\quad Yunshi Lan,\quad Xiang Li\thanks{Corresponding author: \texttt{xiangli@dase.ecnu.edu.cn}.} \\
School of Data Science and Engineering\\
East China Normal University\\[5pt]
{\small\normalfont
\noindent
\hspace*{-0.8em}%
\makebox[\dimexpr\textwidth-2\tabcolsep\relax][c]{%
  \href{https://huggingface.co/SelectiveDOPD}{%
    \hficon\hspace{0.20em}\textcolor{linkgray}{Hugging Face}%
  }%
  \hspace{2.2em}%
  \href{https://github.com/Luli3220/S2D-OPD}{%
    \githubicon\hspace{0.20em}\textcolor{linkgray}{Github}%
  }%
}}
}

\maketitle


\begin{abstract}
Direct On-Policy Distillation (Direct-OPD) transfers reinforcement-learning-induced policy improvements from a small model to a larger student by using the token-level log-ratio between post-RL and pre-RL checkpoints as dense supervision on the student's own rollouts.
This transfer rewards the policy shift at every state, yet the log-ratio measures only relative change: it can stay fixed even as the probability mass that both checkpoints assign to the student's candidate tokens vanishes.
Through an exact construction, we show that the Direct-OPD reward and its update can remain unchanged while the Jensen--Shannon divergence (JSD) and both KL directions between the checkpoints vanish with this mass, and we note that a small JSD bounds how much the teacher's behavior changed.
Motivated by this analysis, we propose Selective Supervision for Direct-OPD (S$^2$D-OPD), which ranks student-sampled states by their teacher--reference JSD and masks Direct-OPD supervision at low-divergence states, retaining only the top 10\% of states per response. Across two teacher pairs and four student models ranging from 1.7B to 8B parameters, S$^2$D-OPD improves held-out accuracy over dense Direct-OPD on AIME and HMMT benchmarks in seven of eight settings and matches it in the eighth, without extra forward passes. 
\end{abstract}

\begin{quote}
\textit{``No Free Lunch for Supervised Machine Learning.''}\hfill --- David H. Wolpert
\end{quote}

\section{Introduction}

Guided by scaling laws~\citep{scalinglaw1,scalinglaw2,scalinglaw3}, recent large language models have continued to scale up pre-training~\citep{deepseek,kimi,glm5}, which broadens their knowledge and latent capabilities. 
Post-training, most prominently reinforcement learning (RL)~\citep{rl1,grpo,rl3}, is then needed to elicit and refine these capabilities.
As model size grows, RL demands more rollout generation, training compute, memory, and infrastructure~\citep{llamarl}, and stable optimization remains difficult to achieve~\citep{instable}.
Consequently, the models with the greatest post-training potential are also the most expensive to improve through RL.

One line of work makes large-scale RL more stable through improved training algorithms~\citep{dapo,gspo,cispo,r3,sao} and more efficient through training and inference systems~\citep{verl,areal,vllm,megatron,slime}.
These advances make RL on large models more practical, but its cost still grows with model size.
Another line of work uses on-policy distillation (OPD)~\citep{GKD,minillm,opd}, in which a stronger teacher provides token-level supervision on states sampled by the student.
OPD transfers capabilities efficiently to a smaller student~\citep{rethinkopd,rethinkopd2}, but it relies on a teacher stronger than the student, which is unavailable when the target is already the strongest model.
Together, these approaches leave a challenge open: how to improve a large model without paying the cost of RL at its scale.


Direct-OPD~\citep{direct-opd} and Proxy-OPD~\citep{ProxyOPD} recently proposed a weak-to-strong route around this challenge: running RL on a small model and transferring the result to a larger one.
Both methods extract the RL-induced policy shift as the token-level log-ratio between the post-RL teacher and its pre-RL reference, and use it as an OPD reward on states sampled by the larger student.
This turns outcome-level rewards into dense token-level supervision for the large model, without running RL at its scale or requiring a teacher stronger than the student.

However, this apparent free lunch leaves unexamined \emph{whether every token-level reward reflects a meaningful change in the teacher's behavior}.
Because the log-ratio reward measures only relative change, it can stay fixed even as the probability mass that both checkpoints assign to the student's top candidates vanishes.
Direct-OPD can therefore reward a state where both checkpoints barely support these candidates as strongly as one where RL clearly changed the teacher's behavior.
This raises the question that motivates this work: \emph{is every state's policy shift worth distilling for free?}

We make this probability-mass mismatch exact in Sec.~\ref{sec:MethodMotivation}: as the mass that both checkpoints assign to the student's top candidates vanishes, the Direct-OPD reward and gradient can stay fixed, whereas the Jensen--Shannon divergence (JSD) and both directions of KL divergence between the checkpoints vanish with it.
Because JSD accounts for the probability mass that the log-ratio ignores, we propose \textbf{\underline{S}elective \underline{S}upervision for \underline{D}irect-OPD} (S$^2$D-OPD), which ranks student-sampled states by teacher--reference JSD and masks Direct-OPD supervision at low-divergence states.
Empirically, the answer to our question is no: retaining only the top 10\% of states per response, S$^2$D-OPD improves held-out accuracy over dense Direct-OPD in seven of eight teacher--student settings and matches it in the eighth (mean gain 0.95 points; 95\% CI 0.40--1.54), without extra forward passes.

In summary, our contributions are threefold:

\begin{itemize}[leftmargin=*]
\item \textbf{A Probability-Mass Mismatch in Direct-OPD.} Through an exact construction, we show that the log-ratio reward and its local gradient on the student can remain fixed while the probability mass behind the policy shift vanishes, and with it the teacher--reference JSD and both KL directions. We further show that JSD bounds how much the teacher's behavior can change at every state, which motivates selecting states by divergence rather than by the reward itself.

\item \textbf{Stable and Effective Selective Transfer.} We propose S$^2$D-OPD, which masks Direct-OPD supervision at low-divergence states during policy transfer. Across four student scales and two teacher pairs, it improves held-out accuracy over Direct-OPD in seven of eight settings and yields smoother late-stage validation curves under the JustRL teacher pair.


\item \textbf{Understanding the Gains from Selective Transfer.} Within a fixed teacher--student setting, performance broadly rises with JSD percentile: the top bin outperforms a uniformly sampled 10\% subset, whereas the lowest bin degrades the student below its initialization and eventually collapses. Across teacher pairs, the pair with lower overall JSD benefits more from masking, consistent with low-divergence filtering being a source of the improvement over dense Direct-OPD.
\end{itemize}

\section{Related Work}

\textbf{On-Policy Distillation.}
OPD trains a student on prefixes sampled from its own policy, using the teacher's next-token distributions as dense supervision at every position~\citep{GKD,minillm,opd}. 
Subsequent work refines it through alternative objectives~\citep{TeacherSide1,Aopd}, stabilization strategies~\citep{rethinkopd,revisitopd}, and privileged-context self-distillation~\citep{opsd,RLCSD}.
Despite their differences, these methods primarily learn from the teacher's policy itself, limiting transfer to improvements already present in the teacher; recent work therefore targets the teacher's
policy shift instead.
ExOPD~\citep{Exopd} extrapolates the teacher's improvement over its reference model to construct a target beyond the teacher.
Direct-OPD~\citep{direct-opd} and Proxy-OPD~\citep{ProxyOPD} instead transfer the log-ratio between a reward-optimized checkpoint and its pre-RL reference, analogous to the logit shifts induced by fine-tuning studied in CMC~\citep{CMC}.
This targets the RL-induced policy shift and can provide useful supervision even for students already stronger than the post-RL teacher.
However, token-level log-ratios capture relative changes but are insensitive to the absolute probability mass supporting these changes.
We examine this probability-mass mismatch in Direct-OPD and use teacher--reference divergence to select supervision positions.

\textbf{Token Selection in Policy Distillation.}
Selective distillation asks which positions are worth training on, and existing criteria differ mainly in which distributions they read.
Some read the student alone, prioritizing positions where it is uncertain~\citep{SE-KD,REOPOLD}; others the teacher alone, weighting by its confidence or local margin~\citep{TeacherSide1,TeacherSide2}; a third group compares the two, emphasizing teacher–student disagreement, which TIP~\citep{tip} organizes through an entropy--divergence taxonomy and TA-OPD~\citep{Ta-opd} restricts to the student's predictive support.
We consider the policy change from a pre-RL reference to a post-RL
teacher at student-visited prefixes.
A related approach, OPD$^2$~\citep{opdd}, gates sampled-token delta updates by sign agreement between the centered teacher--base and teacher--student log-ratios.
Our selection criterion instead ranks positions by teacher--reference JSD, accounting for the probability mass underlying the policy shift while retaining the Direct-OPD update at selected positions.

\section{Preliminaries}\label{sec:preliminary}

\textbf{Setting.} We consider three policies: a pre-RL reference $\pi_{\mathrm{ref}}$, a post-RL teacher $\pi_{\mathrm{T}}$ obtained from $\pi_{\mathrm{ref}}$ by outcome-based RL such as GRPO~\citep{grpo}, and a larger student $\pi_\theta$ initialized at $\pi_{\mathrm{stu}}$.
In our experiments, both teacher-side checkpoints are publicly released (Sec.~\ref{sec:experiment_setting}), so we run no RL ourselves.
Given a prompt $\vx\sim\mathcal D$ and a response $\vy=(y_1,\ldots,y_{|\vy|})$ sampled from the student, position $t$ has state $\vs_t=(\vx,\vy_{<t})$.

\textbf{Direct-OPD} treats the teacher's RL-induced policy shift as a dense reward for the student.
For any token $v$, the reward at state $\vs_t$ is the teacher--reference log-ratio
\begin{equation}
    \Delta_t(v\mid\vs_t)
    = \log \frac{\pi_{\mathrm{T}}(v \mid \vs_t)}
                 {\pi_{\mathrm{ref}}(v \mid \vs_t)},
    \label{eq:policy_shift}
\end{equation}
which is positive where RL increased the probability of $v$ and negative where it decreased it. 

Direct-OPD maximizes this reward on states visited by the student, with KL regularization:
\begin{equation}
    J_{\text{Direct-OPD}}(\theta)
    = \mathbb{E}_{\vx\sim\mathcal D,\,(\vs_t,y_t)\sim\pi_\theta}
    \left[
        \Delta_t(y_t\mid\vs_t)
        - \alpha D_{\mathrm{KL}}\left[
            \pi_\theta(\cdot\mid\vs_t)
            \Vert
            \pi_{\mathrm{stu}}(\cdot\mid\vs_t)
        \right]
    \right].
    \label{eq:direct_opd_objective}
\end{equation}
Here, $\alpha>0$ controls KL regularization, and the expectation
covers all valid response positions. 

\textbf{Top-$K$ implementation.} In practice, Direct-OPD evaluates the reward on the student's top-$K$ candidates at each state rather than only on the sampled token:
\begin{equation}
    \mathcal V_K(\vs_t) = \operatorname{TopK}(\pi_\theta(\cdot\mid \vs_t),K),
    \quad
    \bar p_t(v)=\frac{\pi_\theta(v\mid\vs_t)}{\sum_{u\in\mathcal V_K(\vs_t)}\pi_\theta(u\mid\vs_t)},
\end{equation}
where $\operatorname{TopK}(\cdot, K)$ returns the set of $K$ tokens with the largest probabilities. Each candidate receives the teacher--reference reward $\Delta_t(v\mid \vs_t)$, weighted by its renormalized student probability $\bar p_t(v)$. 
With the state and candidate set held fixed, the implemented local reward-gradient contribution is:
\begin{equation}
    g_t^R=\sum_{v\in\mathcal V_K(\vs_t)}\operatorname{sg}\left[
    \bar p_t(v)\log\frac{\pi_\mathrm T(v\mid\vs_t)}{\pi_\mathrm{ref}(v\mid\vs_t)}
    \right]
    \nabla_\theta \log\pi_\theta(v\mid\vs_t),
    \label{eq:reward_grad}
\end{equation}
where $\operatorname{sg}$ denotes stop-gradient.
The student-anchor KL term supplies a separate regularization gradient.
Appendix~\ref{app:selective_DOPD_Objective} describes its implementation and adaptive coefficient.

\section{Method}
\label{sec:method}




Direct-OPD applies its log-ratio reward at every valid position of a student response.
Sec.~\ref{sec:MethodMotivation} shows that this reward can ignore the probability mass behind the teacher's policy shift, and Sec.~\ref{sec:methods} introduces S$^2$D-OPD, which selects states by teacher--reference divergence.

\subsection{Theoretical Motivation: Probability-Mass Mismatch}
\label{sec:MethodMotivation}

At each state, Direct-OPD weights the teacher--reference rewards $\Delta_t(v\mid\vs_t)$ by the student's renormalized probabilities $\bar p_t(v)$ over its top-$K$ candidates in  Eq.~\ref{eq:reward_grad}.
These weights reflect the student's preferences, whereas the rewards encode relative changes in the teacher's policy.
Neither depends on how much probability mass the teacher and reference place on these candidates: rescaling both by a common factor leaves every log-ratio unchanged.
The following exact construction makes this precise: the teacher--reference divergence can vanish while the Direct-OPD update stays fixed.

\paragraph{An exact construction.}
Fix a state $\vs$ and a student checkpoint $\theta_0$ with top-$K$ candidate set $\mathcal V_K=\mathcal V_K(\vs)$.
Let $\mathbf t\neq\mathbf q$ be strictly positive probability vectors on $\mathcal V_K$, let $\mathbf b$ be a strictly positive probability vector on its complement, and for $0<\epsilon<1$ define the teacher and reference:
\begin{equation}
P_\epsilon(v)=
\begin{cases}
\epsilon t_v, & v\in\mathcal V_K,\\
(1-\epsilon)b_v, & v\notin\mathcal V_K,
\end{cases}\qquad
Q_\epsilon(v)=
\begin{cases}
\epsilon q_v, & v\in\mathcal V_K,\\
(1-\epsilon)b_v, & v\notin\mathcal V_K.
\end{cases}
\label{eq:mass_pair}
\end{equation}
Here, $\epsilon$ is the probability mass that each checkpoint assigns to the student's candidates, while the student, candidate set, and conditional distributions stay fixed.
Then, for every candidate $v\in\mathcal V_K$,
\begin{equation}
\Delta_\epsilon(v\mid\vs)=\log\frac{t_v}{q_v},
\qquad
D_{\mathrm{JS}}(P_\epsilon,Q_\epsilon)=\epsilon\, D_{\mathrm{JS}}(\mathbf t,\mathbf q),
\label{eq:mass_invariance}
\end{equation}
and both directions of KL scale with $\epsilon$ in the same way.
As $\epsilon\to0$, all three divergences vanish, whereas every candidate reward, and hence the Direct-OPD update of Eq.~\ref{eq:reward_grad}, stays fixed and nonzero.
The construction thus isolates a single degree of freedom, the probability mass behind the policy shift, to which the Direct-OPD update is insensitive but the divergences are not.
We state this result formally in Prop.~\ref{prop:mass} and prove it in App.~\ref{app:mass_proof}, including why the update is nonzero.

\paragraph{Divergence bounds the behavioral change.}
The construction shows that the Direct-OPD reward can ignore divergence; conversely, divergence bounds how much the teacher's behavior can change.
For any two distributions $P$ and $Q$ on a finite set and any event $A$,
\begin{equation}
|P(A)-Q(A)|\;\leq\;D_{\mathrm{TV}}(P,Q)\;\leq\;\sqrt{2\,D_{\mathrm{JS}}(P,Q)},
\label{eq:tv_bound}
\end{equation}
where $D_{\mathrm{TV}}(P,Q)=\max_{A}|P(A)-Q(A)|$ is the total variation distance and $D_{\mathrm{JS}}$ is measured in nats.
The second inequality follows from Pinsker's inequality applied to each term of $D_{\mathrm{JS}}(P,Q)=\frac12 D_{\mathrm{KL}}(P\Vert M)+\frac12 D_{\mathrm{KL}}(Q\Vert M)$ with $M=(P+Q)/2$, since $D_{\mathrm{TV}}(P,M)=D_{\mathrm{TV}}(Q,M)=\frac12 D_{\mathrm{TV}}(P,Q)$.
Unlike the construction, this bound holds at every state: wherever the teacher--reference JSD is small, the teacher assigns nearly the same probability as the reference to every token and every set of tokens.
Together, the two results characterize what low-divergence masking removes: states at which the teacher's behavior provably changed little, yet at which the Direct-OPD update can be as large as anywhere else.
They do not show that removing these states improves transfer, which Sec.~\ref{sec:section_ladder} tests by training on JSD percentile bins.

\subsection{Divergence-Guided State Selection}
\label{sec:methods}
Sec.~\ref{sec:MethodMotivation} shows that the Direct-OPD reward is insensitive to the probability mass behind a policy shift, whereas the teacher--reference JSD bounds how much the teacher's behavior changed at a state.
S$^2$D-OPD therefore scores each student-sampled state by this divergence and retains Direct-OPD supervision only at the highest-scoring states within each response (Fig.~\ref{fig:method}).
We use JSD as the default score because, unlike KL, it is symmetric, bounded by $\log 2$, and finite when either checkpoint assigns a token zero probability; the procedure applies unchanged to other divergences.

\begin{figure}[tbp]
\centering
\includegraphics[width=\linewidth]{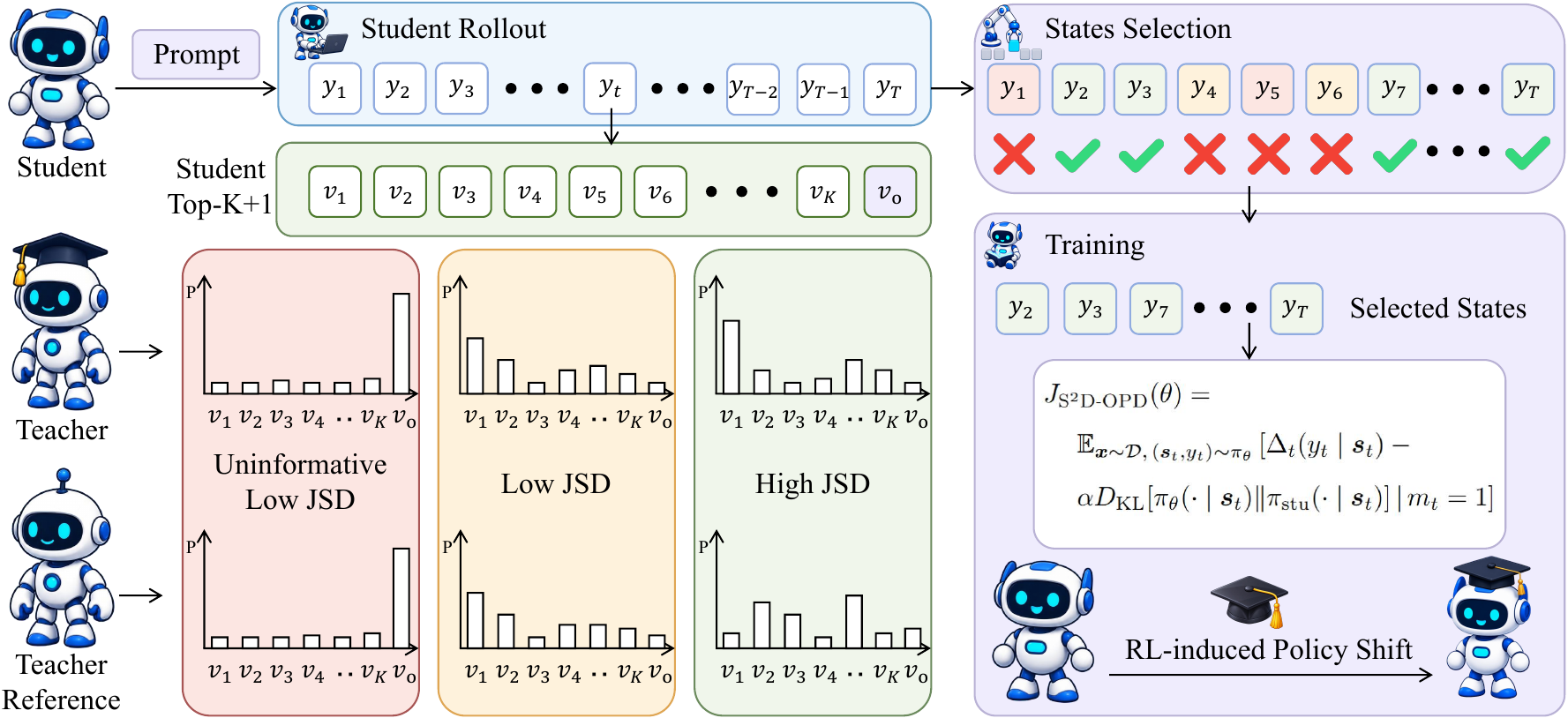}
    \caption{Overview of S$^2$D-OPD. At each student-sampled state, teacher--reference JSD is computed over the student's top-$K$ candidates, with the remaining probability mass grouped into a residual token $v_{\mathrm o}$. The highest-JSD states within each response are retained for the Direct-OPD objective.}
\label{fig:method}
\end{figure}

\paragraph{Scoring on the student's candidates.}
Rather than the full vocabulary, we evaluate the divergence on the top-$K$ candidate set $\mathcal V_K(\vs_t)$, where Direct-OPD already evaluates both checkpoints, and add a residual token $v_{\mathrm o}$ that collects all remaining probability mass.
For $M\in\{\mathrm T,\mathrm{ref}\}$, define
\begin{equation}
\label{eq:topk_dist}
\widetilde{\pi}_M(v\mid\vs_t)=
\begin{cases}
\pi_M(v\mid\vs_t),
& v\in\mathcal V_K(\vs_t),\\[3pt]
1-\displaystyle\sum_{u\in\mathcal V_K(\vs_t)}
\pi_M(u\mid\vs_t),
& v=v_{\mathrm o}.
\end{cases}
\end{equation}
This keeps each candidate's probability and the total residual mass without renormalization.
By the data-processing inequality, the coarsening cannot increase JSD:
\begin{equation}
\label{eq:jsd_score}
\tilde{d}_t\triangleq
D_{\mathrm{JS}}\!\left(
\widetilde{\pi}_{\mathrm T}(\cdot\mid \vs_{t})
\,\middle\|\,
\widetilde{\pi}_{{\mathrm{ref}}}(\cdot\mid \vs_{t})
\right)
\leq
D_{\mathrm{JS}}\!\left(
{\pi}_\mathrm T(\cdot\mid \vs_{t})
\,\middle\|\,
{\pi}_{{\mathrm{ref}}}(\cdot\mid \vs_{t})
\right).
\end{equation}
Fig.~\ref{fig:method} illustrates two ways in which the score can be small.
Either both checkpoints place little mass on the student's candidates, so that both compressed distributions concentrate on $v_{\mathrm o}$, which is the regime of the construction in Sec.~\ref{sec:MethodMotivation}; or both place substantial mass on the candidates and agree on how it is distributed.
The score treats these cases alike, and it need not separate them: because Eq.~\ref{eq:tv_bound} holds for any pair of distributions, including the compressed ones, a small score implies in either case that the teacher barely changed its behavior on the student's candidates.
Conversely, the score is large only when the checkpoints disagree on the individual candidate probabilities or on the total mass assigned to the candidate set.
The score thus measures behavioral change at the resolution of the student's candidates: differences among tail tokens outside the candidate set do not affect it.

\paragraph{Per-response selection.}
We select states within each response, so that rollouts with different overall divergence levels all contribute supervision.
For valid response positions $\mathcal T$ and a common retention ratio $\rho\in(0,1]$, we retain the $k$ states with the largest scores:
\begin{equation}
k
=
\max\left\{
1,\,
\left\lceil\rho|\mathcal T|\right\rceil
\right\},
\qquad
\mathcal I
=
\operatorname{TopK}_{t\in\mathcal T}
(\tilde d_{t},k),
\label{eq:selection}
\end{equation}
where retaining at least one state ensures that every rollout is represented in the objective.
Selecting within each response also makes each mask independent of the rest of the batch.

\paragraph{Masked objective.}
We apply the mask $m_t=\mathbf 1\{t\in\mathcal I\}$ to both terms of the Direct-OPD objective:
\begin{equation}
\label{eq:sparse_objective}
J_{\mathrm{S^2D\text{-}OPD}}(\theta)
=
\mathbb E_{\vx\sim\mathcal D,\,
(\vs_t,y_t)\sim\pi_\theta}
\left[
\Delta_t(y_t\mid\vs_t)
-\alpha D_{\mathrm{KL}}\!\left[
\pi_\theta(\cdot\mid\vs_t)
\Vert
\pi_{\mathrm{stu}}(\cdot\mid\vs_t)
\right]
\,\middle|\,
m_t=1
\right].
\end{equation}
In practice, we hold the selection fixed during optimization and average over all retained states in the batch, which gives the update
\begin{equation}
\label{eq:masked_update}
\mathbf g^{\mathrm{S^2D\text{-}OPD}}
=
\frac{1}{|\mathcal I_{\mathcal B}|}
\sum_{t\in\mathcal I_{\mathcal B}}
\Big[
g_t^R
-\alpha\,\nabla_\theta D_{\mathrm{KL}}\!\left[
\pi_\theta(\cdot\mid\vs_t)
\Vert
\pi_{\mathrm{stu}}(\cdot\mid\vs_t)
\right]
\Big],
\end{equation}
where $\mathcal I_{\mathcal B}$ collects the retained states of all responses in the batch and $g_t^R$ is the reward gradient of Eq.~\ref{eq:reward_grad}.
Masked states thus receive neither the reward nor the student anchor, and $\rho=1$ recovers Direct-OPD.
The adaptive KL coefficient is still computed from all valid positions; App.~\ref{app:selective_DOPD_Objective} gives the remaining optimization details.
Because the score reuses the teacher and reference probabilities that Direct-OPD already computes, S$^2$D-OPD requires no extra forward passes.

\begingroup


\definecolor{QwenAccent}{HTML}{1F4E79}
\definecolor{QwenGainGreen}{HTML}{2F6B4F}
\definecolor{QwenMutedGray}{HTML}{777777}

\definecolor{QwenSelectionBg}{HTML}{F6F6F6}
\definecolor{QwenTestBg}{HTML}{EEF4F8}


\newcommand{\selResult}[1]{%
    \textcolor{QwenMutedGray}{#1}%
}

\newcommand{\selBestResult}[1]{%
    \textbf{\textcolor{QwenMutedGray}{#1}}%
}

\newcommand{\testResult}[1]{%
    #1%
}

\newcommand{\testBestResult}[1]{%
    \textbf{#1}%
}

\newcommand{\testGain}[1]{%
    {\tiny\textcolor{QwenGainGreen}{\enspace(#1)}}%
}

\newcommand{\testAvg}[2]{%
    #1\testGain{#2}%
}

\newcommand{\testBestAvg}[2]{%
    \textbf{#1}\testGain{#2}%
}

\newcommand{\oursmethod}{%
    \textcolor{QwenAccent}{\textbf{+ S$^2$D-OPD}}%
}

\newcommand{\directmethod}{%
    + Direct-OPD%
}

\begin{table}[t]
    \centering
    \scriptsize
    \setlength{\tabcolsep}{2.5pt}
    \renewcommand{\arraystretch}{1.08}

\caption{
    Main results (Avg@32) across four student models and two
    teacher--reference pairs, comparing dense Direct-OPD with
    S$^2$D-OPD using top-10\% JSD selection.
    Checkpoints are selected on AIME24/25 and evaluated on held-out
    AIME26 and HMMT (Nov.\ 2025 and Feb.\ 2026).
    \textbf{Test Avg.} is the average accuracy over 93 held-out
    problems; parentheses denote gains over the initial student.
}
    \label{tab:main_result}

    \begin{adjustbox}{width=\linewidth}
    \begin{tabular}{@{}lcccccccccc@{}}

        \toprule

        &
        \multicolumn{5}{c}{
            \textcolor{QwenAccent}{
                \textbf{R1-Distill-1.5B $\rightarrow$ JustRL-1.5B}
            }
        }
        &
        \multicolumn{5}{c}{
            \textcolor{QwenAccent}{
                \textbf{Nemotron-1.5B $\rightarrow$ QuestA-1.5B}
            }
        }
        \\

        \cmidrule(lr){2-6}
        \cmidrule(lr){7-11}

        &
        \multicolumn{2}{c}{
            \cellcolor{QwenSelectionBg}
            \textbf{Checkpoint Selection}
        }
        &
        \multicolumn{3}{c}{
            \cellcolor{QwenTestBg}
            \textbf{Held-out Evaluation}
        }
        &
        \multicolumn{2}{c}{
            \cellcolor{QwenSelectionBg}
            \textbf{Checkpoint Selection}
        }
        &
        \multicolumn{3}{c}{
            \cellcolor{QwenTestBg}
            \textbf{Held-out Evaluation}
        }
        \\

        \cmidrule(lr){2-3}
        \cmidrule(lr){4-6}
        \cmidrule(lr){7-8}
        \cmidrule(lr){9-11}

        \textbf{Student / Method}
        &
        \cellcolor{QwenSelectionBg}\textbf{AIME24}
        &
        \cellcolor{QwenSelectionBg}\textbf{AIME25}
        &
        \cellcolor{QwenTestBg}\textbf{AIME26}
        &
        \cellcolor{QwenTestBg}\textbf{HMMT}
        &
        \cellcolor{QwenTestBg}\textbf{Test Avg.}
        &
        \cellcolor{QwenSelectionBg}\textbf{AIME24}
        &
        \cellcolor{QwenSelectionBg}\textbf{AIME25}
        &
        \cellcolor{QwenTestBg}\textbf{AIME26}
        &
        \cellcolor{QwenTestBg}\textbf{HMMT}
        &
        \cellcolor{QwenTestBg}\textbf{Test Avg.}
        \\

        \midrule


        \textbf{Qwen3-1.7B}
        & \selResult{49.3}
        & \selResult{36.7}
        & 37.7
        & 28.2
        & 31.3
        & \selResult{49.3}
        & \selResult{36.7}
        & 37.7
        & 28.2
        & 31.3
        \\

        \hspace{0.8em}\directmethod
        & \selResult{59.7}
        & \selResult{42.9}
        & 45.4
        & 32.0
        & \testAvg{36.4}{+5.1}
        & \selResult{58.6}
        & \selResult{43.4}
        & 46.8
        & 31.6
        & \testAvg{36.5}{+5.2}
        \\

        \hspace{0.8em}\oursmethod
        & \selBestResult{61.4}
        & \selBestResult{44.5}
        & \testBestResult{47.4}
        & \testBestResult{32.9}
        & \testBestAvg{37.6}{+6.3}
        & \selBestResult{59.5}
        & \selBestResult{43.8}
        & \testBestResult{48.1}
        & \testBestResult{33.3}
        & \testBestAvg{38.1}{+6.8}
        \\

        \midrule


        \textbf{Qwen3-4B}
        & \selResult{73.2}
        & \selResult{65.2}
        & 64.8
        & 45.9
        & 52.0
        & \selResult{73.2}
        & \selResult{65.2}
        & 64.8
        & 45.9
        & 52.0
        \\

        \hspace{0.8em}\directmethod
        & \selBestResult{78.1}
        & \selResult{70.3}
        & 66.8
        & \testBestResult{47.1}
        & \testAvg{53.5}{+1.5}
        & \selResult{77.6}
        & \selBestResult{70.4}
        & 70.4
        & 47.1
        & \testAvg{54.6}{+2.6}
        \\

        \quad \oursmethod
        & \selResult{77.0}
        & \selBestResult{70.9}
        & \testBestResult{68.7}
        & 46.5
        & \testBestAvg{53.7}{+1.7}
        & \selBestResult{78.0}
        & \selResult{68.3}
        & \testBestResult{70.7}
        & \testBestResult{49.3}
        & \testBestAvg{56.2}{+4.2}
        \\

        \midrule


        \textbf{Qwen3-8B}
        & \selResult{77.3}
        & \selResult{66.0}
        & 67.5
        & 49.7
        & 55.4
        & \selResult{77.3}
        & \selResult{66.0}
        & 67.5
        & 49.7
        & 55.4
        \\

        \hspace{0.8em}\directmethod
        & \selResult{77.5}
        & \selResult{72.2}
        & 69.8
        & \testBestResult{50.2}
        & \testAvg{56.5}{+1.1}
        & \selResult{75.9}
        & \selBestResult{71.7}
        & 69.7
        & 48.6
        & \testAvg{55.4}{+0.0}
        \\

        \hspace{0.8em}\oursmethod
        & \selBestResult{78.0}
        & \selBestResult{73.4}
        & \testBestResult{70.7}
        & 50.0
        & \testBestAvg{56.7}{+1.3}
        & \selBestResult{78.1}
        & \selBestResult{71.7}
        & \testBestResult{70.9}
        & \testBestResult{50.6}
        & \testBestAvg{57.1}{+1.7}
        \\

        \midrule


        \textbf{R1-Distill-7B}
        & \selResult{56.7}
        & \selResult{40.5}
        & 48.2
        & 29.5
        & 35.6
        & \selResult{56.7}
        & \selResult{40.5}
        & 48.2
        & 29.5
        & 35.6
        \\

        \hspace{0.8em}\directmethod
        & \selResult{63.6}
        & \selResult{45.7}
        & \testBestResult{56.8}
        & 32.8
        & \testBestAvg{40.5}{+4.9}
        & \selBestResult{60.7}
        & \selBestResult{42.3}
        & 49.5
        & 31.9
        & \testAvg{37.6}{+2.0}
        \\

        \hspace{0.8em}\oursmethod
        & \selBestResult{64.6}
        & \selBestResult{47.1}
        & 55.0
        & \testBestResult{33.6}
        & \testBestAvg{40.5}{+4.9}
        & \selResult{60.6}
        & \selBestResult{42.3}
        & \testBestResult{52.4}
        & \testBestResult{32.0}
        & \testBestAvg{38.6}{+3.0}
        \\

        \bottomrule

    \end{tabular}
    \end{adjustbox}

\end{table}

\endgroup

\section{Experiments}
\label{sec:experiment}
Sec.~\ref{sec:MethodMotivation} shows which states low-divergence masking removes, but not whether removing them improves transfer.
We test this through three research questions (further analyses in App.~\ref{app:token_overlap},~\ref{app:threshold_sensitivity},~\ref{app:case_study}):
\begin{itemize}
\item[\textbf{RQ1:}] Does masking low-JSD states improve transfer? (Sec.~\ref{sec:main_results})
\item[\textbf{RQ2:}] How does JSD relate to supervision utility and selective masking gains? (Sec.~\ref{sec:section_ladder})
\item[\textbf{RQ3:}] Are the gains robust across divergence measures and selection schemes? (Sec.~\ref{sec:extra_exp})
\end{itemize}

\subsection{Experimental Setup}
\label{sec:experiment_setting}
We evaluate S$^2$D-OPD with two teacher pairs, R1-Distill-1.5B $\rightarrow$ JustRL-1.5B~\citep{justRL} and Nemotron-1.5B $\rightarrow$ QuestA-1.5B~\citep{questa}, and transfer each policy shift to four students: Qwen3-1.7B, Qwen3-4B, Qwen3-8B\footnote{{https://huggingface.co/Qwen/Qwen3-\{1.7,4,8\}B}}, and R1-Distill-7B\footnote{https://huggingface.co/deepseek-ai/DeepSeek-R1-Distill-Qwen-7B}.
All students are trained on Skywork-OR1-RL-Data~\citep{skywork_dataset}.
Following the observation of~\citet{sparsebutcritical} that RL-induced policy shifts are sparse, with large divergence concentrated in a small fraction of tokens, we retain the top 10\% of states per response ($\rho=0.1$) by default.
We select checkpoints on AIME 2024 and AIME 2025 and evaluate the selected checkpoint on three held-out benchmarks: AIME 2026, HMMT November 2025, and HMMT February 2026.
All three postdate the release of every student model and are therefore absent from its training data, including undisclosed post-training data.
We report Avg@32 on all benchmarks, define validation accuracy as the mean over AIME 2024 and AIME 2025, and give further training hyperparameter details in App.~\ref{app:training_details}.
\subsection{Masking Low-Divergence States Improves Transfer}
\label{sec:main_results}

\textbf{Retaining 10\% of states improves held-out transfer.}
Tab.~\ref{tab:main_result} compares S$^2$D-OPD with dense Direct-OPD across four students and two teacher pairs. 
Retaining 10\% of states achieves comparable validation accuracy and improves the held-out Test Avg.\ in seven of eight settings, with a tie in the eighth. 
We assess statistical significance over 93 held-out problems, first averaging each problem’s 32 responses and then averaging its paired differences across the eight settings.
Using a paired problem bootstrap and an exact one-sided sign-flip test, we find that S$^2$D-OPD improves mean held-out accuracy from 46.36\% to 47.31\%, a gain of 0.95 points (95\% CI: 0.40--1.54; $p=6.4\times10^{-4}$).
\begin{figure}[t]
    \centering
    \includegraphics[width=0.9\linewidth]{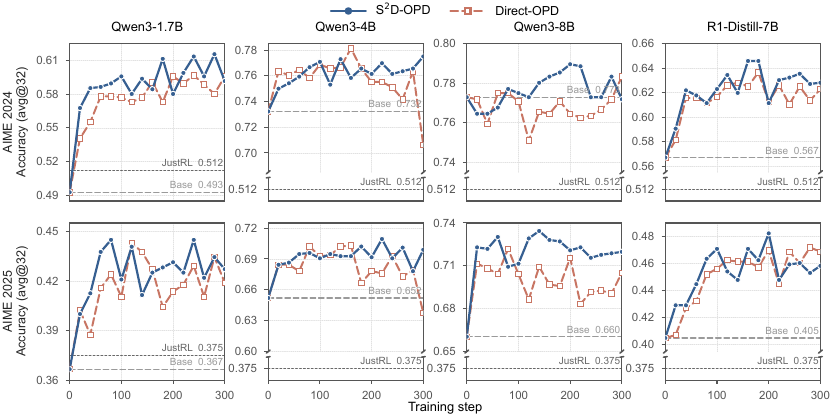}
    \caption{Validation accuracy (Avg@32) under the JustRL teacher pair across four students. Dashed lines mark the initial student; dotted lines mark the JustRL-1.5B teacher.}
    \label{fig:main_result}
\end{figure}
\begin{figure}[tbp]
    \centering
    \begin{subfigure}[t]{0.640\linewidth}
        \centering
        \includegraphics[width=\linewidth]{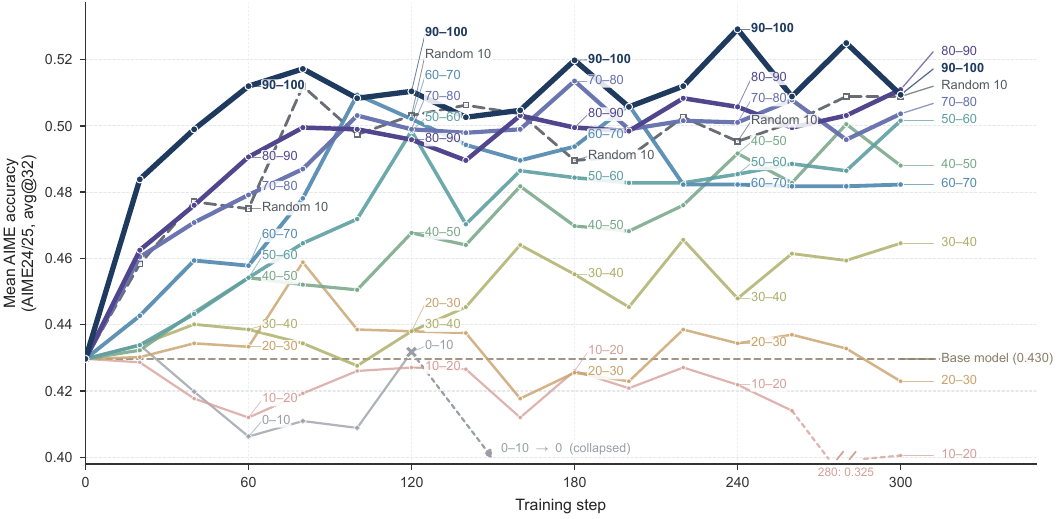}
        \caption{Performance across JSD percentile bins.}
        \label{fig:ladder_avg_32}
    \end{subfigure}
    \hfill
    \begin{subfigure}[t]{0.330\linewidth}
        \centering
        \includegraphics[width=\linewidth]{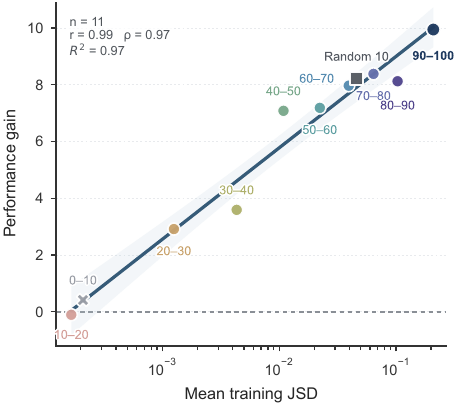}
        \caption{Mean JSD tracks transfer gains.}
        \label{fig:jsd_best_gain}
    \end{subfigure}
    \caption{
    Which states are retained, not how many, determines transfer (Qwen3-1.7B, JustRL teacher pair).
    \textbf{(a)} Validation accuracy when training on each JSD percentile bin or on a random 10\% of states.
    \textbf{(b)} Gain in peak validation accuracy over the initial student versus the mean training JSD of the retained states; the line is a log-linear fit.
}
    \label{fig:jsd_supervision_quality}
\end{figure}

\textbf{Selective supervision preserves early learning while improving
late-stage stability.}
Fig.~\ref{fig:main_result} shows the validation trajectories under the JustRL teacher pair.
Despite masking low-divergence states, S$^2$D-OPD keeps pace with dense Direct-OPD during early training, matching or exceeding its accuracy over the first 60 steps for all four students.
Discarding 90\% of states thus does not slow early learning, suggesting that the supervision driving early improvement is concentrated in the retained high-divergence subset.
Later in training, S$^2$D-OPD maintains higher accuracy for all four students and is more stable on Qwen3-4B and Qwen3-8B, where dense Direct-OPD repeatedly falls back from its peaks and, on Qwen3-4B, even drops below the initial student by step 300.
These results answer RQ1: masking low-JSD states improves held-out transfer without slowing early learning.

\subsection{Low-Divergence Supervision Is Redundant and Harmful}
\label{sec:section_ladder}
Sec.~\ref{sec:MethodMotivation} shows that low-JSD states are those at which the teacher's behavior changed little, yet the Direct-OPD update there can be as large as anywhere else.
To test what such supervision contributes, we fix the JustRL teacher pair, the Qwen3-1.7B student, the training data, and the hyperparameters, and vary only which states enter the Direct-OPD objective.
Within each response, we rank states by JSD, split them into ten equal-sized percentile bins (0--10, \ldots, 90--100), and train a separate student on each bin.
As a control, we train on a uniformly sampled 10\% of states, whose update is an unbiased estimate of the dense Direct-OPD update (Fig.~\ref{fig:ladder_avg_32}).

\textbf{Dense supervision is largely redundant.}
The random 10\% control reaches a peak validation accuracy of 51.2, on par with dense Direct-OPD (51.3; Tab.~\ref{tab:main_result}).
Discarding 90\% of states at random thus loses little, supporting the view of~\citet{rethinkopd2} that OPD is ``data-overfed but algorithm-starved.''

\textbf{Which states are retained determines transfer.}
At the same 10\% budget, performance rises broadly with JSD percentile, and the top bin reaches 52.9, above both the random control and dense Direct-OPD.
The effect is graded: across all eleven subsets, the gain in peak validation accuracy over the initial student is approximately linear in the logarithm of the subset's mean training JSD ($R^2=0.97$; Fig.~\ref{fig:jsd_best_gain}).
The random control lies on the same line, so within this setting the mean divergence of the retained states predicts transfer whether they are selected by rank or at random.

\textbf{Low-divergence supervision is harmful, not merely weak.}
Trained in isolation, the three lowest bins end below the initial student, and the two lowest collapse.
This matches the regime identified in Sec.~\ref{sec:MethodMotivation}, where the Direct-OPD update persists although the teacher barely changed.
Top-$K$ overlap suggests a mechanism (App.~\ref{app:token_overlap}): the top bin moves the student toward the teacher while preserving its overlap with the reference, whereas lower bins move it away from the reference without a commensurate gain in teacher alignment.

\textbf{Divergence and reward magnitude rank states differently.}
Fig.~\ref{fig:mask-residue-main} shows a state from an initial Qwen3-1.7B rollout under the JustRL pair.
RL lowers the probability of the sampled token \texttt{5} from 0.12 to $2\times10^{-5}$, giving this state the largest Direct-OPD reward magnitude among the response's 8{,}192 states, although the teacher's distribution moves by only 0.12 in total variation.
Its JSD (0.044) falls below the retention cutoff (0.067), so S$^2$D-OPD masks it while retaining a state with a smaller log-ratio but a larger redistribution of mass (JSD 0.262).

\begin{figure}[tbp]
    \centering
    \begin{subfigure}[t]{0.585\linewidth}
        \centering
        \includegraphics[width=\linewidth]{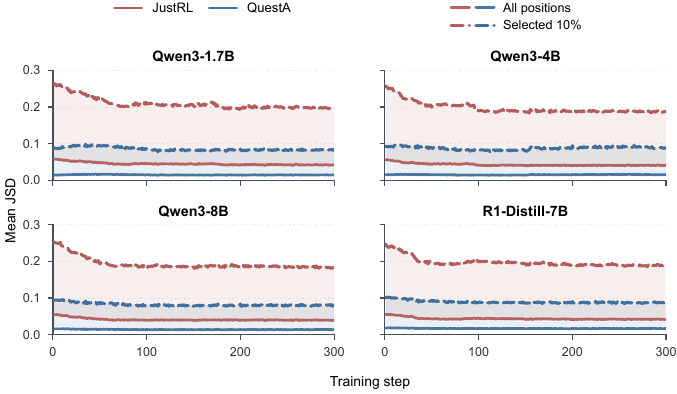}
        \caption{All-position and selected-position JSD.}
        \label{fig:js_mean_dynamics}
    \end{subfigure}
    \hfill
    \begin{subfigure}[t]{0.405\linewidth}
        \centering
        \includegraphics[width=\linewidth]{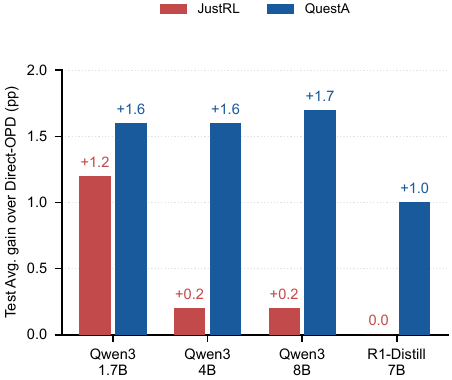}
        \caption{Held-out gains over Direct-OPD.}
        \label{fig:js_mean_performance}
    \end{subfigure}
    \caption{
        Teacher-pair comparison with top-10\% retention across four students.
        \textbf{(a)} Mean teacher--reference JSD over all valid response
        states (solid) and retained states (dashed) during
        S$^2$D-OPD training.
        \textbf{(b)} Held-out Test Avg. gains of S$^2$D-OPD
        over dense Direct-OPD under each teacher pair, in percentage points.
    }
    \label{fig:js_select}
\end{figure}

\begin{figure}[t]
    \centering
    \includegraphics[width=0.92\linewidth]{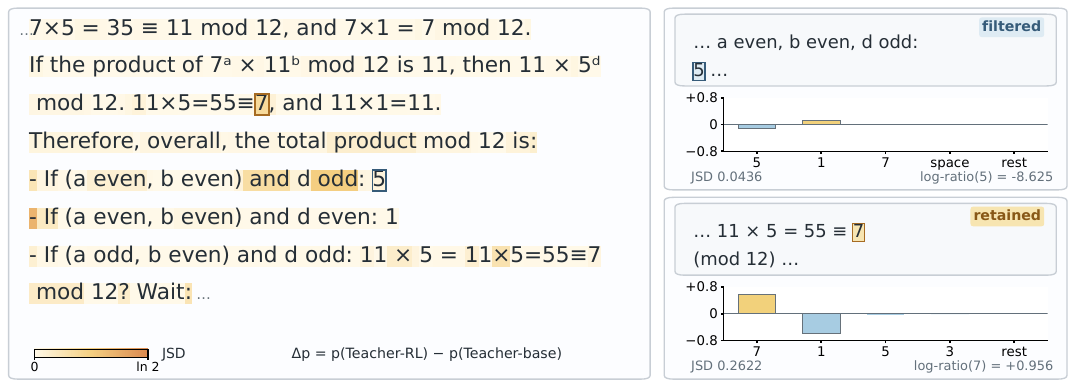}
    \caption{
    \textbf{JSD and token log-ratio rank states differently.}
    The state with the largest sampled-token log-ratio in the response is masked, while a larger redistribution of probability mass is retained.
    }
    \label{fig:mask-residue-main}
\end{figure}

\textbf{Masking helps more under the lower-divergence teacher pair.}
Mean JSD is higher for JustRL than for QuestA over both all states and the retained top 10\% (Fig.~\ref{fig:js_mean_dynamics}), yet the held-out gains of masking over dense Direct-OPD are larger under QuestA: 1.0--1.7 points versus 0.0--1.2 for JustRL across the four students (Fig.~\ref{fig:js_mean_performance}).
If the benefit came from concentrating supervision on high absolute divergence, this ordering would be reversed; together with the harm of low-JSD bins, it suggests that masking helps mainly by removing supervision that undermines learning.
These results answer RQ2: JSD ranks supervision utility within a teacher--student setting, and masking pays off by filtering low-divergence supervision rather than by maximizing the divergence of what remains.

\subsection{Selective Masking Is Robust Across Design Choices}\label{sec:extra_exp}
We vary three design choices while keeping the rest of S$^2$D-OPD fixed: the divergence used for scoring, the scope over which states are ranked, and the retention rate (Fig.~\ref{fig:combined_ablation}).

\textbf{Divergence measure.}
The construction in Sec.~\ref{sec:MethodMotivation} shows that both KL directions vanish with the candidate mass just as JSD does, so either should flag the same low-mass states.
Under the JustRL pair, selection by forward KL, $D_{\mathrm{KL}}(\pi_{\mathrm T}\Vert\pi_{\mathrm{ref}})$, or reverse KL, $D_{\mathrm{KL}}(\pi_{\mathrm{ref}}\Vert\pi_{\mathrm T})$, attains higher mean validation accuracy than dense Direct-OPD from step 160 onward on all three Qwen3 students, and avoids the late decline of dense Direct-OPD on Qwen3-4B (final accuracy 71.6--74.1 versus 67.2).
Forward KL closely tracks JSD at all three scales, whereas reverse KL is weaker on Qwen3-4B.

\textbf{Selection scope.}
Ranking states across the whole batch, so that a response may retain none, matches response-level selection on Qwen3-1.7B under both teacher pairs (peak validation accuracy 52.3 versus 52.9 under JustRL and 51.7 versus 51.7 under QuestA).
The per-response rule of Sec.~\ref{sec:methods} is thus not needed for accuracy; we keep it so that each mask is independent of other responses.

\textbf{Retention rate.}
Varying $\rho$ from 5\% to 20\% on Qwen3-1.7B changes peak validation accuracy by less than one point (52.2--53.0), and every rate stays above dense Direct-OPD (51.3) and the random 10\% control (51.2; App.~\ref{app:threshold_sensitivity}).
These results answer RQ3: within the tested range, the gains do not hinge on the divergence measure, selection scope, or retention rate.

\begin{figure*}[tbp]
    \centering
\begin{subfigure}[t]{0.595\textwidth}
    \vspace{0pt}
    \centering
    \includegraphics[width=\linewidth]{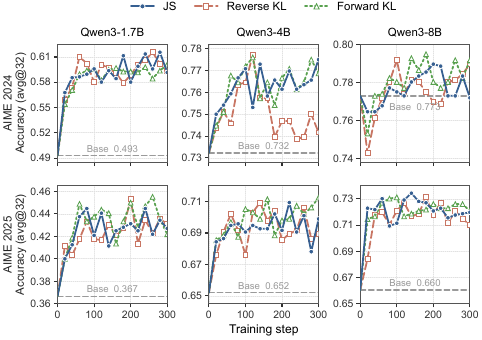}
    \caption{Divergence measure.}
    \label{fig:kl}
\end{subfigure}
\hfill
\begin{subfigure}[t]{0.390\textwidth}
    \vspace{0pt}
    \centering
    \includegraphics[width=\linewidth]{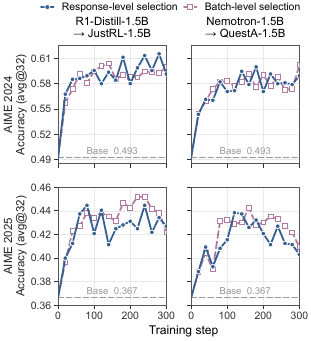}
    \caption{Selection scope.}
    \label{fig:aggregation}
\end{subfigure}
    \caption{
        Robustness of selective masking across design choices.
        \textbf{(a)} Top-10\% selection by JSD, reverse KL, and forward KL on Qwen3-1.7B, 4B, and 8B under the JustRL teacher pair.
        \textbf{(b)} Response-level versus batch-level selection on Qwen3-1.7B under both teacher pairs. 
    }
    \label{fig:combined_ablation}
\end{figure*}


\section{Conclusion}
Direct-OPD transfers the policy shift that RL induces in a small teacher to a larger student by rewarding every student-sampled state with the teacher--reference log-ratio.
We showed that this reward ignores the probability mass behind the shift: it can stay fixed while the mass on the student's candidates, and with it the teacher--reference divergence, vanishes.
Building on this observation, we introduced S$^2$D-OPD, which retained Direct-OPD supervision only at the highest-JSD states of each response and required no extra forward passes.
Retaining 10\% of states improved held-out accuracy over dense Direct-OPD in seven of eight teacher--student settings and tied in the eighth.
Controlled analyses further showed that dense supervision was largely redundant, that transfer depended on which states were retained rather than how many, and that low-divergence supervision could harm the student when trained on alone; the gains persisted across divergence measures, selection scopes, and retention rates.
These findings suggest that weak-to-strong policy transfer depends not only on what is transferred but also on where it is applied.
Extending state selection to adaptive retention rates and to domains beyond mathematical reasoning is a natural next step.

\subsection*{AI Use Statement}

We used generative AI tools solely to improve the readability and
clarity of the manuscript, including language editing and suggestions
on presentation. All AI-assisted revisions were reviewed by the
authors, who take full responsibility for the final content of
this work.


\subsection*{Reproducibility Statement}

To facilitate reproducibility, we describe the selective supervision
procedure in Section~\ref{sec:method} and report the teacher--reference
pairs, datasets, training configurations, and evaluation protocols in
Section~\ref{sec:experiment_setting} and
Appendix~\ref{app:training_details}.
Our code is fully disclosed on GitHub.

\bibliography{iclr2027_conference}

@misc{r3,
      title={Stabilizing MoE Reinforcement Learning by Aligning Training and Inference Routers}, 
      author={Wenhan Ma and Hailin Zhang and Liang Zhao and Yifan Song and Yudong Wang and Zhifang Sui and Fuli Luo},
      year={2025},
      eprint={2510.11370},
      archivePrefix={arXiv},
      primaryClass={cs.CL},
      url={https://arxiv.org/abs/2510.11370}, 
}

@misc{slime,
  author       = {Zilin Zhu and Chengxing Xie and Xin Lv and slime Contributors},
  title        = {slime: An LLM post-training framework for RL Scaling},
  year         = {2025},
  howpublished = {\url{https://github.com/THUDM/slime}},
  note         = {GitHub repository. Corresponding author: Xin Lv},
  urldate      = {2025-06-19}
}

@misc{sao,
      title={Single-Rollout Asynchronous Optimization for Agentic Reinforcement Learning}, 
      author={Zhenyu Hou and Yujiang Li and Jie Tang and Yuxiao Dong},
      year={2026},
      eprint={2607.07508},
      archivePrefix={arXiv},
      primaryClass={cs.LG},
      url={https://arxiv.org/abs/2607.07508}, 
}

@misc{kimi,
      title={Kimi K2.5: Visual Agentic Intelligence}, 
      author={Kimi Team and Tongtong Bai and Yifan Bai and Yiping Bao and S. H. Cai and Yuan Cao and Y. Charles and H. S. Che and Cheng Chen and Guanduo Chen and Huarong Chen and Jia Chen and Jiahao Chen and Jianlong Chen and Jun Chen and Kefan Chen and others},
      year={2026},
      eprint={2602.02276},
      archivePrefix={arXiv},
      primaryClass={cs.CL},
      url={https://arxiv.org/abs/2602.02276}, 
}

@misc{deepseek,
      title={DeepSeek-V4: Towards Highly Efficient Million-Token Context Intelligence}, 
      author={DeepSeek-AI and Anyi Xu and Bangcai Lin and Bing Xue and Bingxuan Wang and Bingzheng Xu and Bochao Wu and Bowei Zhang and Chaofan Lin and Chen Dong and Chenchen Ling and Chengda Lu and Chenggang Zhao and Chengqi Deng and Chengyu Hou and Chenhao Xu and others},
      year={2026},
      eprint={2606.19348},
      archivePrefix={arXiv},
      primaryClass={cs.CL},
      url={https://arxiv.org/abs/2606.19348}, 
}

@misc{glm5,
      title={GLM-5: from Vibe Coding to Agentic Engineering}, 
      author={GLM-5-Team and Aohan Zeng and Xin Lv and Zhenyu Hou and Zhengxiao Du and Qinkai Zheng and Bin Chen and Da Yin and Chendi Ge and Chenghua Huang and Chengxing Xie and Chenzheng Zhu and Congfeng Yin and Cunxiang Wang and Gengzheng Pan and Hao Zeng and others},
      year={2026},
      eprint={2602.15763},
      archivePrefix={arXiv},
      primaryClass={cs.LG},
      url={https://arxiv.org/abs/2602.15763}, 
}

@misc{scalinglaw1,
      title={Scaling Laws for Neural Language Models}, 
      author={Jared Kaplan and Sam McCandlish and Tom Henighan and Tom B. Brown and Benjamin Chess and Rewon Child and Scott Gray and Alec Radford and Jeffrey Wu and Dario Amodei},
      year={2020},
      eprint={2001.08361},
      archivePrefix={arXiv},
      primaryClass={cs.LG},
      url={https://arxiv.org/abs/2001.08361}, 
}

@inproceedings{scalinglaw2,
author = {Hoffmann, Jordan and Borgeaud, Sebastian and Mensch, Arthur and Buchatskaya, Elena and Cai, Trevor and Rutherford, Eliza and de Las Casas, Diego and Hendricks, Lisa Anne and Welbl, Johannes and Clark, Aidan and Hennigan, Tom and Noland, Eric and Millican, Katie and van den Driessche, George and Damoc, Bogdan and others},
title = {Training compute-optimal large language models},
year = {2022},
isbn = {9781713871088},
publisher = {Curran Associates Inc.},
address = {Red Hook, NY, USA},
booktitle = {Proceedings of the 36th International Conference on Neural Information Processing Systems},
articleno = {2176},
numpages = {15},
location = {New Orleans, LA, USA},
series = {NIPS '22}
}

@article{
scalinglaw3,
title={Reconciling Kaplan and Chinchilla Scaling Laws},
author={Tim Pearce and Jinyeop Song},
journal={Transactions on Machine Learning Research},
issn={2835-8856},
year={2024},
url={https://openreview.net/forum?id=NLoaLyuUUF},
note={Reproducibility Certification}
}

@inproceedings{
rl1,
title={Does Reinforcement Learning Really Incentivize Reasoning Capacity in {LLM}s Beyond the Base Model?},
author={Yang Yue and Zhiqi Chen and Rui Lu and Andrew Zhao and Zhaokai Wang and Yang Yue and Shiji Song and Gao Huang},
booktitle={The Thirty-ninth Annual Conference on Neural Information Processing Systems},
year={2025},
url={https://openreview.net/forum?id=4OsgYD7em5}
}

@misc{grpo,
      title={DeepSeekMath: Pushing the Limits of Mathematical Reasoning in Open Language Models}, 
      author={Zhihong Shao and Peiyi Wang and Qihao Zhu and Runxin Xu and Junxiao Song and Xiao Bi and Haowei Zhang and Mingchuan Zhang and Y. K. Li and Y. Wu and Daya Guo},
      year={2024},
      eprint={2402.03300},
      archivePrefix={arXiv},
      primaryClass={cs.CL},
      url={https://arxiv.org/abs/2402.03300}, 
}

@inproceedings{
rl3,
title={Echo Chamber: {RL} Post-training Amplifies Behaviors Learned in Pretraining},
author={Rosie Zhao and Alexandru Meterez and Sham M. Kakade and Cengiz Pehlevan and Samy Jelassi and Eran Malach},
booktitle={Second Conference on Language Modeling},
year={2025},
url={https://openreview.net/forum?id=dp4KWuSDzj}
}

@misc{llamarl,
      title={LlamaRL: A Distributed Asynchronous Reinforcement Learning Framework for Efficient Large-scale LLM Training}, 
      author={Bo Wu and Sid Wang and Yunhao Tang and Jia Ding and Eryk Helenowski and Liang Tan and Tengyu Xu and Tushar Gowda and Zhengxing Chen and Chen Zhu and Xiaocheng Tang and Yundi Qian and Beibei Zhu and Rui Hou},
      year={2025},
      eprint={2505.24034},
      archivePrefix={arXiv},
      primaryClass={cs.LG},
      url={https://arxiv.org/abs/2505.24034}, 
}

@inproceedings{instable,
title={{ARLA}rena: A Unified Framework for Stable Agentic Reinforcement Learning},
author={Xiaoxuan Wang and Han Zhang and Haixin Wang and Yidan Shi and Ruoyan Li and Kaiqiao Han and Chenyi Tong and Haoran Deng and Alexander K Taylor and Renliang Sun and Yanqiao Zhu and Jason Cong and Yizhou Sun and Wei Wang},
booktitle={Forty-third International Conference on Machine Learning},
year={2026},
url={https://openreview.net/forum?id=90kxFi9VGP}
}

@misc{dapo,
      title={DAPO: An Open-Source LLM Reinforcement Learning System at Scale}, 
      author={Qiying Yu and Zheng Zhang and Ruofei Zhu and Yufeng Yuan and Xiaochen Zuo and Yu Yue and Weinan Dai and Tiantian Fan and Gaohong Liu and Lingjun Liu and Xin Liu and Haibin Lin and Zhiqi Lin and Bole Ma and Guangming Sheng and Yuxuan Tong and others},
      year={2025},
      eprint={2503.14476},
      archivePrefix={arXiv},
      primaryClass={cs.LG},
      url={https://arxiv.org/abs/2503.14476}, 
}

@misc{gspo,
      title={Group Sequence Policy Optimization}, 
      author={Chujie Zheng and Shixuan Liu and Mingze Li and Xiong-Hui Chen and Bowen Yu and Chang Gao and Kai Dang and Yuqiong Liu and Rui Men and An Yang and Jingren Zhou and Junyang Lin},
      year={2025},
      eprint={2507.18071},
      archivePrefix={arXiv},
      primaryClass={cs.LG},
      url={https://arxiv.org/abs/2507.18071}, 
}

@misc{cispo,
      title={MiniMax-M1: Scaling Test-Time Compute Efficiently with Lightning Attention}, 
      author={MiniMax and Aili Chen and Aonian Li and Bangwei Gong and Binyang Jiang and Bo Fei and Bo Yang and Boji Shan and Changqing Yu and Chao Wang and Cheng Zhu and Chengjun Xiao and Chengyu Du and Chi Zhang and Chu Qiao and Chunhao Zhang and others},
      year={2025},
      eprint={2506.13585},
      archivePrefix={arXiv},
      primaryClass={cs.CL},
      url={https://arxiv.org/abs/2506.13585}, 
}

@inproceedings{verl,
author = {Sheng, Guangming and Zhang, Chi and Ye, Zilingfeng and Wu, Xibin and Zhang, Wang and Zhang, Ru and Peng, Yanghua and Lin, Haibin and Wu, Chuan},
title = {HybridFlow: A Flexible and Efficient RLHF Framework},
year = {2025},
isbn = {9798400711961},
publisher = {Association for Computing Machinery},
address = {New York, NY, USA},
url = {https://doi.org/10.1145/3689031.3696075},
doi = {10.1145/3689031.3696075},
booktitle = {Proceedings of the Twentieth European Conference on Computer Systems},
pages = {1279–1297},
numpages = {19},
location = {Rotterdam, Netherlands},
series = {EuroSys '25}
}

@inproceedings{areal,
title={{AREAL}: A Large-Scale Asynchronous Reinforcement Learning System for Language Reasoning},
author={Wei Fu and Jiaxuan Gao and Xujie Shen and Chen Zhu and Zhiyu Mei and Chuyi He and Shusheng Xu and Guo Wei and Jun Mei and WANG JIASHU and Tongkai Yang and Binhang Yuan and Yi Wu},
booktitle={The Thirty-ninth Annual Conference on Neural Information Processing Systems},
year={2025},
url={https://openreview.net/forum?id=X9diEuva9R}
}

@inproceedings{vllm,
author = {Kwon, Woosuk and Li, Zhuohan and Zhuang, Siyuan and Sheng, Ying and Zheng, Lianmin and Yu, Cody Hao and Gonzalez, Joseph and Zhang, Hao and Stoica, Ion},
title = {Efficient Memory Management for Large Language Model Serving with PagedAttention},
year = {2023},
isbn = {9798400702297},
publisher = {Association for Computing Machinery},
address = {New York, NY, USA},
url = {https://doi.org/10.1145/3600006.3613165},
doi = {10.1145/3600006.3613165},
booktitle = {Proceedings of the 29th Symposium on Operating Systems Principles},
pages = {611–626},
numpages = {16},
location = {Koblenz, Germany},
series = {SOSP '23}
}

@inproceedings{megatron,
author = {Narayanan, Deepak and Shoeybi, Mohammad and Casper, Jared and LeGresley, Patrick and Patwary, Mostofa and Korthikanti, Vijay and Vainbrand, Dmitri and Kashinkunti, Prethvi and Bernauer, Julie and Catanzaro, Bryan and Phanishayee, Amar and Zaharia, Matei},
title = {Efficient large-scale language model training on GPU clusters using megatron-LM},
year = {2021},
isbn = {9781450384421},
publisher = {Association for Computing Machinery},
address = {New York, NY, USA},
url = {https://doi.org/10.1145/3458817.3476209},
doi = {10.1145/3458817.3476209},
booktitle = {Proceedings of the International Conference for High Performance Computing, Networking, Storage and Analysis},
articleno = {58},
numpages = {15},
location = {St. Louis, Missouri},
series = {SC '21}
}

@misc{rethinkopd,
      title={Rethinking On-Policy Distillation of Large Language Models: Phenomenology, Mechanism, and Recipe}, 
      author={Yaxuan Li and Yuxin Zuo and Bingxiang He and Jinqian Zhang and Chaojun Xiao and Cheng Qian and Tianyu Yu and Huan-ang Gao and Wenkai Yang and Zhiyuan Liu and Ning Ding},
      year={2026},
      eprint={2604.13016},
      archivePrefix={arXiv},
      primaryClass={cs.LG},
      url={https://arxiv.org/abs/2604.13016}, 
}

@misc{direct-opd,
      title={Weak-to-Strong Generalization via Direct On-Policy Distillation}, 
      author={Shiyuan Feng and Huan-ang Gao and Haohan Chi and Hanlin Wu and Zhilong Zhang and Zheng Jiang and Bingxiang He and Wei-Ying Ma and Ya-Qin Zhang and Hao Zhou},
      year={2026},
      eprint={2607.05394},
      archivePrefix={arXiv},
      primaryClass={cs.LG},
      url={https://arxiv.org/abs/2607.05394}, 
}

@misc{ProxyOPD,
      title={Proxy OPD: On-Policy Distillation with Transferable Relative Proxy Update}, 
      author={Daocheng Fu and Rong Wu and Yu Yang and Jianbiao Mei and Licheng Wen and Pinlong Cai and Xuemeng Yang and Yong Liu and Botian Shi and Yu Qiao},
      year={2026},
      eprint={2607.11505},
      archivePrefix={arXiv},
      primaryClass={cs.LG},
      url={https://arxiv.org/abs/2607.11505}, 
}

@misc{Exopd,
      title={Learning beyond Teacher: Generalized On-Policy Distillation with Reward Extrapolation}, 
      author={Wenkai Yang and Weijie Liu and Ruobing Xie and Kai Yang and Saiyong Yang and Yankai Lin},
      year={2026},
      eprint={2602.12125},
      archivePrefix={arXiv},
      primaryClass={cs.LG},
      url={https://arxiv.org/abs/2602.12125}, 
}

@misc{Ta-opd,
      title={Not All Disagreement Is Learnable: Token Teachability in On-Policy Distillation}, 
      author={Yuanyi Wang and Su Lu and Yanggan Gu and Pengkai Wang and Yifan Yang and Zhaoyi Yan and Congkai Xie and Jianmin Wu and Hongxia Yang},
      year={2026},
      eprint={2605.26844},
      archivePrefix={arXiv},
      primaryClass={cs.LG},
      url={https://arxiv.org/abs/2605.26844}, 
}

@misc{opdd,
      title={On-Policy Delta Distillation}, 
      author={Byeongho Heo and Jaehui Hwang and Sangdoo Yun and Dongyoon Han},
      year={2026},
      eprint={2607.15161},
      archivePrefix={arXiv},
      primaryClass={cs.LG},
      url={https://arxiv.org/abs/2607.15161}, 
}

@misc{tip,
      title={TIP: Token Importance in On-Policy Distillation}, 
      author={Yuanda Xu and Hejian Sang and Zhengze Zhou and Ran He and Zhipeng Wang and Alborz Geramifard},
      year={2026},
      eprint={2604.14084},
      archivePrefix={arXiv},
      primaryClass={cs.LG},
      url={https://arxiv.org/abs/2604.14084}, 
}

@article{opd,
  author = {Kevin Lu and Thinking Machines Lab},
  title = {On-Policy Distillation},
  journal = {Thinking Machines Lab: Connectionism},
  year = {2025},
  note = {https://thinkingmachines.ai/blog/on-policy-distillation},
  doi = {10.64434/tml.20251026},
}

@inproceedings{
GKD,
title={On-Policy Distillation of Language Models: Learning from Self-Generated Mistakes},
author={Rishabh Agarwal and Nino Vieillard and Yongchao Zhou and Piotr Stanczyk and Sabela Ramos Garea and Matthieu Geist and Olivier Bachem},
booktitle={The Twelfth International Conference on Learning Representations},
year={2024},
url={https://openreview.net/forum?id=3zKtaqxLhW}
}

@inproceedings{
minillm,
title={Mini{LLM}: Knowledge Distillation of Large Language Models},
author={Yuxian Gu and Li Dong and Furu Wei and Minlie Huang},
booktitle={The Twelfth International Conference on Learning Representations},
year={2024},
url={https://openreview.net/forum?id=5h0qf7IBZZ}
}

@misc{REOPOLD,
      title={Scaling Reasoning Efficiently via Relaxed On-Policy Distillation}, 
      author={Jongwoo Ko and Sara Abdali and Young Jin Kim and Tianyi Chen and Pashmina Cameron},
      year={2026},
      eprint={2603.11137},
      archivePrefix={arXiv},
      primaryClass={cs.LG},
      url={https://arxiv.org/abs/2603.11137}, 
}

@misc{SE-KD,
      title={Rethinking Selective Knowledge Distillation}, 
      author={Almog Tavor and Itay Ebenspanger and Neil Cnaan and Mor Geva},
      year={2026},
      eprint={2602.01395},
      archivePrefix={arXiv},
      primaryClass={cs.CL},
      url={https://arxiv.org/abs/2602.01395}, 
}

@inproceedings{
TeacherSide1,
title={Entropy-Aware On-Policy Distillation of Language Models},
author={Woogyeol Jin and Taywon Min and Yongjin Yang and Dennis Wei and Yi Zhou and Swanand Ravindra Kadhe and Nathalie Baracaldo and Kimin Lee},
booktitle={Forty-third International Conference on Machine Learning},
year={2026},
url={https://openreview.net/forum?id=J5i09faOOf}
}

@misc{TeacherSide2,
      title={SAGE-OPD: Selective Agent-Guided Intervention for Multi-Turn On-Policy Distillation}, 
      author={Yuhang Zhou and Lizhu Zhang and Yifan Wu and Mingyi Wang and Bo Peng and Jiayi Liu and Xiangjun Fan and Zhuokai Zhao},
      year={2026},
      eprint={2606.19659},
      archivePrefix={arXiv},
      primaryClass={cs.CL},
      url={https://arxiv.org/abs/2606.19659}, 
}

@misc{opsd,
      title={Self-Distilled Reasoner: On-Policy Self-Distillation for Large Language Models}, 
      author={Siyan Zhao and Zhihui Xie and Mengchen Liu and Jing Huang and Guan Pang and Feiyu Chen and Aditya Grover},
      year={2026},
      eprint={2601.18734},
      archivePrefix={arXiv},
      primaryClass={cs.LG},
      url={https://arxiv.org/abs/2601.18734}, 
}

@misc{revisitopd,
      title={Revisiting On-Policy Distillation: Empirical Failure Modes and Simple Fixes}, 
      author={Yuqian Fu and Haohuan Huang and Kaiwen Jiang and Jiacai Liu and Zhuo Jiang and Yuanheng Zhu and Dongbin Zhao},
      year={2026},
      eprint={2603.25562},
      archivePrefix={arXiv},
      primaryClass={cs.LG},
      url={https://arxiv.org/abs/2603.25562}, 
}

@misc{Aopd,
      title={Asymmetric On-Policy Distillation: Bridging Exploitation and Imitation at the Token Level}, 
      author={Nan Jia and Haojin Yang and Xing Ma and Jiesong Lian and Shuailiang Zhang and Weipeng Zhang and Ke Zeng and Xunliang Cai and Zequn Sun},
      year={2026},
      eprint={2605.06387},
      archivePrefix={arXiv},
      primaryClass={cs.LG},
      url={https://arxiv.org/abs/2605.06387}, 
}

@misc{RLCSD,
      title={RLCSD: Reinforcement Learning with Contrastive On-Policy Self-Distillation}, 
      author={Leyi Pan and Shuchang Tao and Yunpeng Zhai and Lingzhe Zhang and Zhaoyang Liu and Bolin Ding and Aiwei Liu and Lijie Wen},
      year={2026},
      eprint={2606.11709},
      archivePrefix={arXiv},
      primaryClass={cs.LG},
      url={https://arxiv.org/abs/2606.11709}, 
}

@inproceedings{
sparsebutcritical,
title={Sparse but Critical: A Token-Level Analysis of Distributional Shifts in {RLVR} Fine-Tuning of {LLM}s},
author={Haoming Meng and Kexin Huang and Shaohang Wei and Chiyu Ma and Shuo Yang and Xue Wang and Guoyin Wang and Bolin Ding and Jingren Zhou},
booktitle={The Fourteenth International Conference on Learning Representations},
year={2026},
url={https://openreview.net/forum?id=8vWIXno8LW}
}

@misc{skywork_dataset,
      title={Skywork Open Reasoner 1 Technical Report}, 
      author={Jujie He and Jiacai Liu and Chris Yuhao Liu and Rui Yan and Chaojie Wang and Peng Cheng and Xiaoyu Zhang and Fuxiang Zhang and Jiacheng Xu and Wei Shen and Siyuan Li and Liang Zeng and Tianwen Wei and Cheng Cheng and Bo An and Yang Liu and Yahui Zhou},
      year={2025},
      eprint={2505.22312},
      archivePrefix={arXiv},
      primaryClass={cs.LG},
      url={https://arxiv.org/abs/2505.22312}, 
}

@misc{justRL,
      title={JustRL: Scaling a 1.5B LLM with a Simple RL Recipe}, 
      author={Bingxiang He and Zekai Qu and Zeyuan Liu and Yinghao Chen and Yuxin Zuo and Cheng Qian and Kaiyan Zhang and Weize Chen and Chaojun Xiao and Ganqu Cui and Ning Ding and Zhiyuan Liu},
      year={2025},
      eprint={2512.16649},
      archivePrefix={arXiv},
      primaryClass={cs.CL},
      url={https://arxiv.org/abs/2512.16649}, 
}

@inproceedings{
questa,
title={QuestA: Expanding Reasoning Capacity in {LLM}s via Question Augmentation},
author={Jiazheng Li and Hongzhou Lin and Hong Lu and Kaiyue Wen and Zaiwen Yang and Jiaxuan Gao and Yi Wu and Jingzhao Zhang},
booktitle={The Fourteenth International Conference on Learning Representations},
year={2026},
url={https://openreview.net/forum?id=3MifB0f7qR}
}

@misc{rethinkopd2,
      title={Rethinking On-Policy Distillation of Large Language Models II: One Training Example}, 
      author={Zixuan Fu and Bingxiang He and Yuxin Zuo and Haohuan Huang and Jinqian Zhang and Ruhang Xiao and Cheng Qian and Qinyu Luo and Huan-ang Gao and Yudong Wang and Zhiyuan Liu and Ning Ding and Chaojun Xiao},
      year={2026},
      eprint={2609.04172},
      archivePrefix={arXiv},
      primaryClass={cs.AI},
      url={https://arxiv.org/abs/2609.04172}, 
}

@inproceedings{
cmc,
title={Cross-model Control: Improving Multiple Large Language Models in One-time Training},
author={Jiayi Wu and Hao Sun and Hengyi Cai and Lixin Su and Shuaiqiang Wang and Dawei Yin and Xiang Li and Ming Gao},
booktitle={The Thirty-eighth Annual Conference on Neural Information Processing Systems},
year={2024},
url={https://openreview.net/forum?id=YPqHSTSoFs}
}
\bibliographystyle{iclr2027_conference}

\appendix

\section{Optimization Details}
\label{app:selective_DOPD_Objective}

This section details the adaptive KL control inherited from Direct-OPD and the selective position-wise aggregation used in S$^2$D-OPD.

\subsection{Adaptive KL Control}
\label{app:direct_opd_optimization}

The KL coefficient controls the strength of the student anchor relative to the policy-shift reward. We update this coefficient using Direct-OPD's sign-based controller:
\begin{equation}
\label{eq:adaptive_kl}
\alpha_{m+1}
=
\operatorname{clip}\!\left(
\alpha_m \left[1+\epsilon\,\operatorname{sgn}(\bar r_m)\right],
\alpha_{\min},\alpha_{\max}
\right),
\end{equation}
where $\bar r_m$ denotes the mean student-weighted policy shift $\bar p_t(v)\Delta_t(v\mid\vs_t)$ over valid response positions and their top-$K$ candidates at iteration $m$.
The controller increases $\alpha$ when the mean weighted policy shift is positive and decreases it when negative, subject to the bounds.
We use $\alpha_0=2.5$, $\epsilon=0.01$, and $[\alpha_{\min},\alpha_{\max}]=[0.5,2.5]$.
The updated coefficient $\alpha_{m+1}$ is used in the subsequent actor update.

\subsection{Detailed Loss Aggregation}
\label{app:aggregation}

Direct-OPD averages the local update over all valid response positions. 
S$^2$D-OPD keeps the local update unchanged but restricts this average to selected positions.
Let $\mathcal I$ collect the positions selected independently within each response by Eq.~\ref{eq:selection} across the batch.
Giving each retained position equal weight yields the local ascent direction
\begin{equation}
\label{eq:full_gradient}
g_\theta^{\mathrm{S^2D\text{-}OPD}}
=
\frac{1}{|\mathcal I|}
\sum_{t\in\mathcal I}
\Biggl[
\sum_{v\in\mathcal V_K(\vs_t)}
\operatorname{sg}\!\left[
\bar p_t(v)\Delta_t(v\mid\vs_t)
\right]
\nabla_\theta\log\pi_\theta(v\mid\vs_t)
-\alpha_{m+1}\nabla_\theta\widehat d_t(\theta)
\Biggr].
\end{equation}
Here, $\operatorname{sg}$ denotes stop-gradient, and $\widehat d_t(\theta)$ is the per-position KL penalty estimator toward the initial student $\pi_{\mathrm{stu}}$, computed using \texttt{verl}'s \texttt{low\_var\_kl} implementation.
Both terms are applied to the same selected response positions. The sampled prefixes, candidate sets, selection mask,
and KL coefficient are held fixed during differentiation.

\section{Proof of the Mass-Invariance Construction}\label{app:mass_proof}

\begin{proposition}[Mass invariance of the Direct-OPD update]
\label{prop:mass}
Fix a state $\vs$ and a student checkpoint $\theta_0$, and write $p(v)=\pi_{\theta_0}(v\mid\vs)>0$ for all $v\in\mathcal V$.
Let $\mathcal V_K(\vs)\subsetneq\mathcal V$ be the top-$K$ candidate set, with $K\geq2$, and define
$\bar p(v)=p(v)/\sum_{u\in\mathcal V_K(\vs)}p(u)$
for $v\in\mathcal V_K(\vs)$.
Let $\mathbf t=(t_v)_{v\in\mathcal V_K(\vs)}$ and
$\mathbf q=(q_v)_{v\in\mathcal V_K(\vs)}$ be distinct, strictly positive probability vectors on the candidate set,
and let $\mathbf b=(b_v)_{v\in\mathcal V\setminus\mathcal V_K(\vs)}$ be a common strictly positive probability vector
on its complement.
For $0<\epsilon<1$, define the teacher and reference distributions:
\begin{equation}
P_\epsilon(v)=
\begin{cases}
\epsilon t_v, & v\in\mathcal V_K(\vs),\\
(1-\epsilon)b_v, & v\notin\mathcal V_K(\vs),
\end{cases}\qquad
Q_\epsilon(v)=
\begin{cases}
\epsilon q_v, & v\in\mathcal V_K(\vs),\\
(1-\epsilon)b_v, & v\notin\mathcal V_K(\vs).
\end{cases}
\label{eq:mass_pair_app}
\end{equation}
Here, $\epsilon$ is the total probability mass that each checkpoint assigns to the student's candidates.
Both checkpoints vary with $\epsilon$, while the student, candidate set, and conditional distributions remain fixed.

Then, for every $\epsilon\in(0,1)$:
\begin{enumerate}[label=(\roman*),leftmargin=*]
\item \emph{Reward invariance.} For every candidate $v\in\mathcal V_K(\vs)$, the Direct-OPD reward (Eq.~\ref{eq:policy_shift}) with $\pi_\mathrm{T}=P_\epsilon$ and $\pi_\mathrm{ref}=Q_\epsilon$ satisfies
\begin{equation}
\Delta_\epsilon(v\mid\vs)
=
\log\frac{P_\epsilon(v)}{Q_\epsilon(v)}
=
\log\frac{t_v}{q_v}.
\label{eq:mass_reward}
\end{equation}
\item \emph{Fixed, nonzero update.} The local reward gradient of Direct-OPD (Eq.~\ref{eq:reward_grad}) at $\theta_0$,
\begin{equation}
\mathbf g_\epsilon
=
\sum_{v\in\mathcal V_K(\vs)}
\bar p(v)\log\frac{t_v}{q_v}
\nabla_\theta\log\pi_\theta(v\mid\vs)
\Big|_{\theta=\theta_0}
=\mathbf g,
\label{eq:mass_gradient}
\end{equation}
is independent of $\epsilon$ and nonzero with respect to the student's logits.
\item \emph{Divergences scale with the mass.} The teacher--reference divergences satisfy
\begin{equation}
D_{\mathrm{JS}}(P_\epsilon,Q_\epsilon)
=\epsilon D_{\mathrm{JS}}(\mathbf t,\mathbf q),
\ 
D_{\mathrm{KL}}(P_\epsilon\Vert Q_\epsilon)
=\epsilon D_{\mathrm{KL}}(\mathbf t\Vert\mathbf q),
\ 
D_{\mathrm{KL}}(Q_\epsilon\Vert P_\epsilon)
=\epsilon D_{\mathrm{KL}}(\mathbf q\Vert\mathbf t).
\label{eq:mass_divergences}
\end{equation}
\end{enumerate}
Consequently, as $\epsilon\to0$, all three divergences vanish, whereas the reward and its gradient remain fixed and nonzero.
\end{proposition}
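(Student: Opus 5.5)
Parts (i) and (iii) follow from direct computation with the definitions in Eq.~\ref{eq:mass_pair_app}, and part (ii) needs a short argument about the softmax Jacobian.

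\textbf{Part (i).} For $v\in\mathcal V_K(\vs)$ the common factor $\epsilon$ cancels in $P_\epsilon(v)/Q_\epsilon(v)=\epsilon t_v/(\epsilon q_v)$, which gives Eq.~\ref{eq:mass_reward}.

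\textbf{Part (iii).} The key observation is that on the complement $P_\epsilon(v)=Q_\epsilon(v)$. Every term of an $f$-divergence therefore vanishes off $\mathcal V_K(\vs)$.
\begin{itemize}[leftmargin=*]
\item \emph{KL.} On $\mathcal V_K(\vs)$ we have $\sum_v \epsilon t_v\log(t_v/q_v)=\epsilon D_{\mathrm{KL}}(\mathbf t\Vert\mathbf q)$. The reverse direction follows symmetrically.
\item \emph{JSD.} The mixture $M_\epsilon=(P_\epsilon+Q_\epsilon)/2$ equals $\epsilon m_v$ on candidates, with $m=(\mathbf t+\mathbf q)/2$, and equals $(1-\epsilon)b_v$ off them. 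Each half-KL term to $M_\epsilon$ then factorizes in the same way, giving $\epsilon D_{\mathrm{JS}}(\mathbf t,\mathbf q)$.
\end{itemize}
Since $\mathbf t\neq\mathbf q$, all three limiting constants are strictly positive and finite, because the vectors are strictly positive. Hence the three divergences are proportional to $\epsilon$ and tend to $0$.

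\textbf{Part (ii), independence of $\epsilon$.} Nothing in Eq.~\ref{eq:reward_grad} depends on $\epsilon$ except through $\Delta_\epsilon$. The student $\theta_0$, the set $\mathcal V_K(\vs)$ and the weights $\bar p$ are fixed by hypothesis, and $\Delta_\epsilon$ is $\epsilon$-free by (i). So $\mathbf g_\epsilon=\mathbf g$.

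\textbf{Part (ii), nonvanishing — the main obstacle.} I would interpret ``nonzero with respect to the student's logits'' as treating the logits $z\in\mathbb R^{|\mathcal V|}$ as the parameters, so $\pi_\theta=\mathrm{softmax}(z)$. Then $\nabla_z\log p(v)=e_v-p$, and
\[
\mathbf g=\sum_{v\in\mathcal V_K}c_v(e_v-p),\qquad c_v=\bar p(v)\log\frac{t_v}{q_v}.
\]
I would then check individual coordinates.
\begin{itemize}[leftmargin=*]
\item \emph{Coordinate $u\notin\mathcal V_K(\vs)$.} Such a $u$ exists because $\mathcal V_K\subsetneq\mathcal V$. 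Its component is $-p(u)\sum_v c_v$, which is nonzero as soon as $C:=\sum_v c_v\neq 0$.
\item \emph{Candidate coordinate $v\in\mathcal V_K(\vs)$.} If $C=0$, the component is $c_v-p(v)C=c_v$. This is nonzero for some $v$, because $\mathbf t\neq\mathbf q$ forces $\log(t_v/q_v)\neq0$ for some candidate, and $\bar p(v)>0$.
\end{itemize}
In either case $\mathbf g\neq 0$. The only care needed is this case split on $C$. The hypothesis $p>0$ everywhere guarantees $p(u)>0$ in the first case. The hypothesis $K\ge2$ is what makes two distinct probability vectors on $\mathcal V_K(\vs)$ possible at all.

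Combining the three parts gives the stated consequence as $\epsilon\to0$.
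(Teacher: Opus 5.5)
Your proposal is correct and follows the paper's proof essentially step for step. In the paper, as in your plan, $\epsilon$ cancels in the log-ratio, the shared tail contributes nothing to any divergence while $\epsilon$ factors out on the candidate set, and nonvanishing comes from writing $\mathbf g=\mathbf w-(\sum_u w_u)\,p$ in logit coordinates. Your case split on $C=\sum_v c_v$ is just the unrolled form of the paper's contrapositive: an off-candidate coordinate forces $\sum_u w_u=0$, hence $\mathbf w=\mathbf 0$, hence $\mathbf t=\mathbf q$.
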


\begin{proof}
(i) The common factor $\epsilon$ cancels in the ratio.
(ii) By (i), each term of $\mathbf g_\epsilon$ is independent of $\epsilon$, since $\mathcal V_K(\vs)$ and $\bar p$ depend only on $\theta_0$.
For softmax logits $\mathbf z$, $\nabla_{\mathbf z}\log\pi(v\mid\vs)=\mathbf e_v-\pi(\cdot\mid\vs)$, so $\mathbf g=\mathbf w-(\sum_u w_u)\,p$, where $w_v=\bar p(v)\log(t_v/q_v)$ on $\mathcal V_K(\vs)$ and $w_v=0$ elsewhere.
Since $\mathcal V_K(\vs)\subsetneq\mathcal V$ and $p>0$, $\mathbf g=\mathbf 0$ would require $\mathbf w=\mathbf 0$, i.e., $\mathbf t=\mathbf q$.
(iii) Off the candidate set, $P_\epsilon$, $Q_\epsilon$, and their mixture coincide, so the shared tail contributes zero to each divergence; on the candidate set, the factor $\epsilon$ cancels inside each logarithm and factors out of the sum.
\end{proof}


\section{Token Overlap Across Divergence Ranks}
\label{app:token_overlap}

Prior work associates successful OPD with increasing overlap between
the student's and teacher's high-probability token sets
\citep{rethinkopd}.
Direct-OPD instead transfers a teacher--reference log-ratio,
motivating us to examine overlap with both checkpoints.
We track how these overlaps evolve when supervision is restricted
to different teacher--reference JSD deciles.

For a student policy $\pi_S$ and a comparison policy $\pi_C$ at state $s_t$, let
\begin{align}
    \mathcal V_k^S(\vs_t)
    &= \operatorname{TopK}\bigl(\pi_S(\cdot\mid\vs_t), k\bigr), \\
    \mathcal V_k^C(\vs_t)
    &= \operatorname{TopK}\bigl(\pi_C(\cdot\mid\vs_t), k\bigr),
\end{align}
so that the per-state overlap ratio is
\begin{equation}
    \operatorname{Overlap}_k(S,C;\vs_t)
    =
    \frac{
        \left|\mathcal V_k^S(\vs_t)
        \cap
        \mathcal V_k^C(\vs_t)\right|
    }{k}.
\end{equation}
which we report with $k=16$ against both the post-RL teacher ($C = T$) and the reference ($C = T_{\mathrm{ref}}$). Every bin starts from the same student initialization, so the two panels record how training on a given decile moves the student relative to each checkpoint.

\begin{figure}[tbp]
    \centering
    \includegraphics[width=\linewidth]{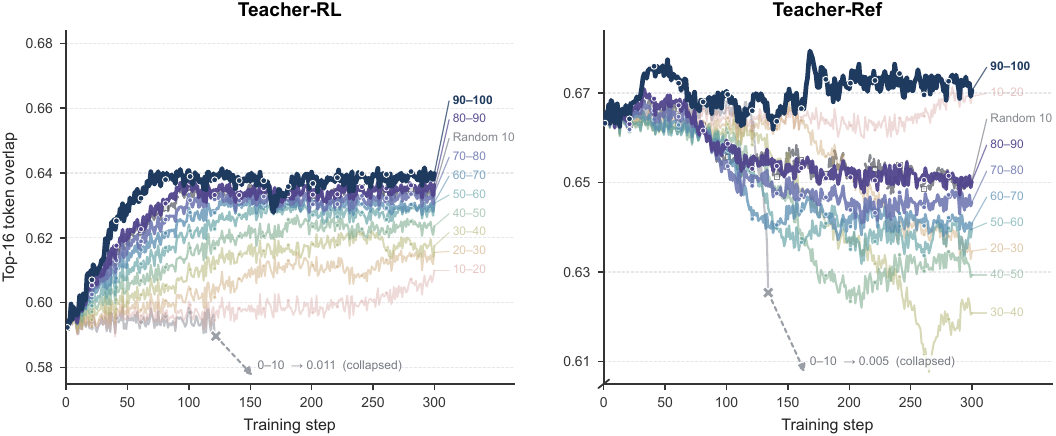}
    \caption{
Top-16 token overlap over training for Qwen3-1.7B under the JustRL teacher pair,
using JSD deciles or random 10\% selection.
Panels compare the student with the post-RL teacher (left) and pre-RL reference (right).
Overlap is the fraction of shared tokens between top-16 sets.
    }
    \label{fig:overlap}
\end{figure}


Higher-JSD bins generally attain greater overlap with the post-RL teacher (Figure~\ref{fig:overlap}, left), broadly tracking the performance ordering in Figure~\ref{fig:ladder_avg_32}.
This resembles the overlap growth observed in successful standard OPD \citep{rethinkopd}, here under an objective that transfers
the teacher's policy shift.
The reference panel adds a complementary observation: the highest-JSD bin increases teacher overlap while maintaining reference overlap above its initial level.
Reference overlap alone does not follow the performance ordering; for example, the 10--20 bin recovers close to its initial reference overlap despite much smaller gains in teacher overlap.

The strongest transfer occurs without a trade-off between overlap with the two checkpoints: the highest-JSD bin increases teacher overlap while sustaining reference overlap.
This suggests that transferring the RL-induced policy shift need not entail moving from the reference's high-probability candidate set.
The teacher--reference log-ratio can favor a token that remains highly ranked under both checkpoints, allowing transfer through changes in relative preference among shared candidates.
Related evidence from standard OPD shows that supervision restricted to tokens shared by the student and teacher recovers nearly the full benefit of student top-$k$ supervision \citep{rethinkopd}.
Our JSD criterion identifies positions with substantial changes in candidate probabilities or total mass relative to the tail (Eq.~\ref{eq:topk_dist}).
Together, these observations suggest a view of selective transfer as learning substantial changes in probability allocation within largely overlapping candidate spaces.

\section{Sensitivity to Retention Ratio}
\label{app:threshold_sensitivity}

Our main experiments retain the 10\% of positions with the highest teacher--reference JSD within each response.
Figure~\ref{fig:percentile} compares retention ratios of 5\%, 10\%, 15\%, and 20\% for Qwen3-1.7B under the JustRL teacher pair.
All four settings yield broadly similar learning curves and substantial gains over the base model on AIME 2024 and AIME 2025.
No retention ratio consistently dominates across both benchmarks and training checkpoints.

\begin{figure}[tbp]
\centering
\includegraphics[width=\linewidth]{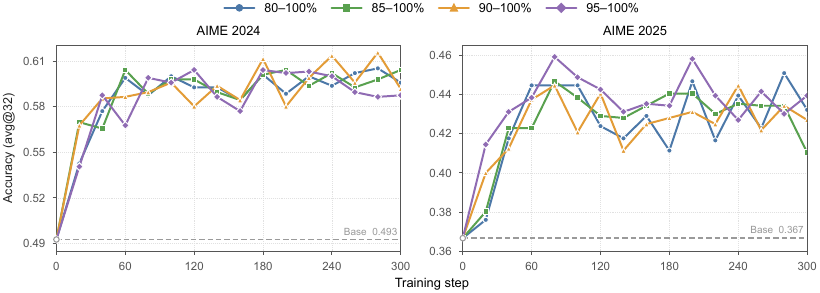}
\caption{
Sensitivity to the per-response retention ratio for Qwen3-1.7B
under the JustRL teacher pair.
The four settings retain the top 20\%, 15\%, 10\%, and 5\% of
positions ranked by teacher--reference JSD.
Dashed lines indicate base-model accuracy.
}
\label{fig:percentile}
\end{figure}


Together with the decile analysis in Section~\ref{sec:section_ladder}, these results reveal an asymmetry in supervision selection.
Training on low-JSD bins can substantially degrade transfer, whereas varying the retained fraction within the high-JSD region has a comparatively small effect.
Moreover, the most selective setting does not consistently outperform the broader subsets.
This pattern supports using JSD to screen supervision, without requiring that larger divergence always imply greater transfer utility.
It also accords with our theoretical motivation: small divergence bounds the observable policy change, while large divergence alone does not establish the value of transferring that change.
The similar performance across retention ratios suggests that effective selection admits a broad operating range in this setting, and we adopt 10\% as a common default.

\definecolor{filteredblue}{HTML}{527F99}
\definecolor{retainedgold}{HTML}{C49336}

\newcommand{\filteredtoken}[1]{%
    \begingroup
    \setlength{\fboxsep}{1pt}%
    \setlength{\fboxrule}{0.5pt}%
    \fcolorbox{filteredblue}{white}{\texttt{#1}}%
    \endgroup
}

\newcommand{\retainedtoken}[1]{%
    \begingroup
    \setlength{\fboxsep}{1pt}%
    \setlength{\fboxrule}{0.5pt}%
    \fcolorbox{retainedgold}{white}{\texttt{#1}}%
    \endgroup
}

\section{Case Study: What Policy Changes Does Selection Preserve?}
\label{app:case_study}

To examine what the mask keeps and removes, we inspect states from 20 rollouts of the initial Qwen3-1.7B student on AIME 2026 under the JustRL pair.
At each state, we compare the teacher and reference next-token distributions over the student's top-$K$ candidates and the residual token, and we mark the state as retained if its JSD falls within the top 10\% of its response, following the per-response rule of Sec.~\ref{sec:methods}.
We write $\Delta p$ for the change in a token's probability from the reference to the teacher.
We selected the four cases below by hand for interpretability, so they illustrate how the score behaves rather than estimate how often each pattern occurs.

\begin{figure}[t]
    \centering
    \includegraphics[width=\linewidth]{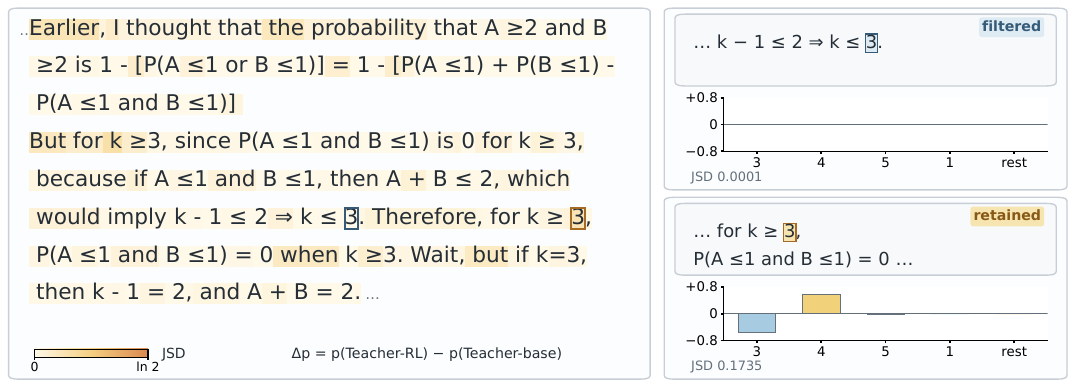}
    \caption{
        \textbf{Same sampled token, different mask decisions.}
        The mask filters a calculation on which the teacher and reference agree but retains supervision for a boundary correction.
    }
    \label{fig:mask-boundary}
\end{figure}

\begin{figure}[t]
    \centering
    \includegraphics[width=\linewidth]{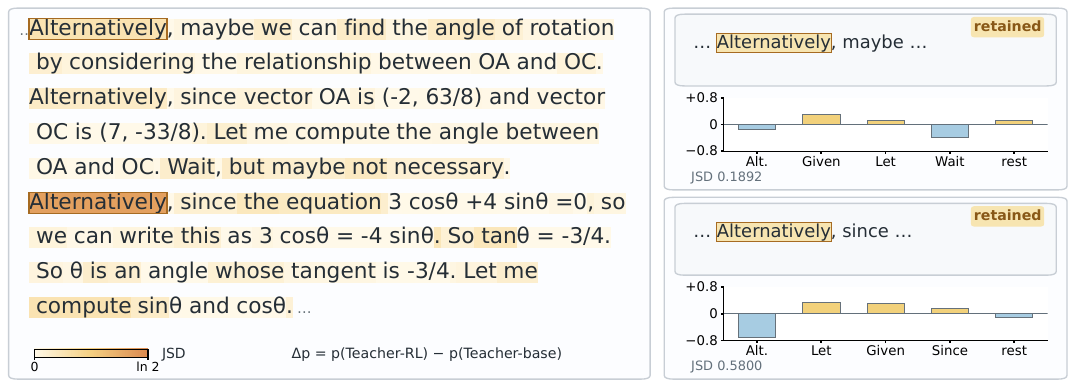}
    \caption{
    \textbf{Retained shifts encode continuation preferences.}
    At two retained states, the teacher shifts probability
    away from \texttt{Alternatively}; the later state favors
    \texttt{Let}, \texttt{Given}, and \texttt{Since}.
    }
    \label{fig:mask-wording}
\end{figure}

\textbf{Selection acts on states, not on sampled tokens.}
Fig.~\ref{fig:mask-boundary} contrasts two states at which the student samples \texttt{3}.
At the first, \filteredtoken{3} completes a valid calculation, and the teacher and reference distributions are nearly identical ($\mathrm{JSD}=0.0001$).
At the second, \retainedtoken{3} sets an incorrect boundary: $k=3$ still permits $A=B=1$, so excluding their simultaneous occurrence requires $k\geq4$.
Here RL moves probability from \texttt{3} to \texttt{4} ($\Delta p\approx-0.55$ and $+0.56$), and the state is retained ($\mathrm{JSD}=0.1735$).
The same token thus receives opposite mask decisions, because the score depends on how the teacher's distribution changed at the state rather than on which token was sampled.

\begin{figure}[t]
    \centering
    \includegraphics[width=\linewidth]{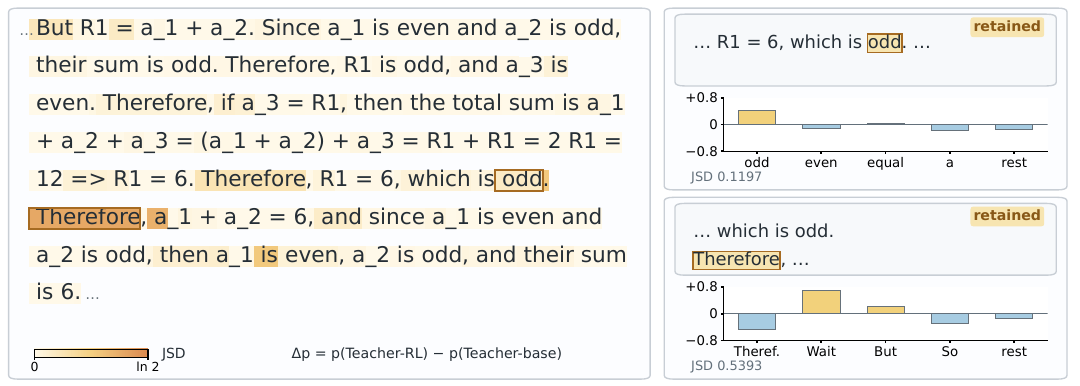}
    \caption{
        \textbf{A later reconsideration cue does not undo
        an earlier adverse shift.}
        The mask retains a shift toward an incorrect parity
        judgment, followed by a teacher preference for
        reconsideration.
    }
    \label{fig:mask-error}
\end{figure}

\textbf{Retained shifts include preferences over how reasoning continues.}
Fig.~\ref{fig:mask-wording} shows two retained states at which the student opens a new passage with \retainedtoken{Alternatively}.
At both, RL lowers the probability of this token; at the later state the decrease is large ($\Delta p\approx-0.71$), and the teacher instead favors \texttt{Let}, \texttt{Given}, and \texttt{Since}.
Together with the arithmetic cases, this shows that the retained supervision covers both local corrections and preferences over how the reasoning proceeds, consistent with a score that measures behavioral change rather than correctness.

\textbf{JSD does not judge correctness.}
In Fig.~\ref{fig:mask-error}, the student describes \texttt{R1 = 6} as \retainedtoken{odd}.
RL reinforces this error, raising the probability of \texttt{odd} ($\Delta p\approx+0.41$) and lowering that of \texttt{even} ($\Delta p\approx-0.12$).
The JSD of 0.1197 exceeds the response's retention cutoff of 0.0744, so this adverse shift is retained.
At the next state, conditioned on the erroneous statement, the teacher favors \texttt{Wait} over \texttt{Therefore} (from 0.01 to 0.71 and from 0.49 to 0.02), a preference compatible with reconsideration, but this later shift does not cancel the reward for \texttt{odd} at the earlier state.
The score thus selects states by the size of the teacher's change, not by whether the change is correct.

\textbf{Summary.}
These cases separate where supervision is applied from what it encourages.
JSD decides which states are retained, while Direct-OPD's log-ratio rewards decide which tokens are encouraged at those states.
Because the score measures how much the teacher's distribution changed on the student's candidates, the retained states include numerical corrections, continuation preferences, and local errors alike.
Masking therefore concentrates Direct-OPD supervision on the states where RL changed the teacher most, without filtering shifts by their correctness.

\section{Training Details}
\label{app:training_details}

\paragraph{Data and prompt.} All runs use the math subset of Skywork-OR1-RL-Data~\citep{skywork_dataset}
and apply the following prompt template.

\begin{quote}
\small\ttfamily
Solve the following math problem step by step.\\
The last line of your response should be of the form\\
Answer: \$Answer (without quotes) where \$Answer is the answer to the problem.\\[0.5em]
\{Question\}\\[0.5em]
Remember to put your answer on its own line after "Answer:".
\end{quote}

\paragraph{Training and evaluation.}
All experiments are implemented with \texttt{verl} and run on 8 NVIDIA H200 GPUs. Table~\ref{tab:training_details} summarizes
the default training and evaluation configuration used throughout
our experiments.
Owing to limited compute, each configuration is trained with a single run; the confidence interval in Sec.~\ref{sec:main_results} resamples held-out problems and does not reflect variation across training seeds.

\begin{table}[t]
    \centering
    \caption{Default training and evaluation configuration.}
    \label{tab:training_details}
    \small
    \renewcommand{\arraystretch}{1.08}
    \setlength{\tabcolsep}{8pt}
    \begin{tabular}{@{}lcc@{}}
        \toprule
        \textbf{Setting} & \textbf{Training} & \textbf{Evaluation} \\
        \midrule
        Framework                 & \texttt{verl}         & -- \\
        Hardware                  & $8\times$ NVIDIA H200 & -- \\
        Global batch size         & 128                   & -- \\
        Mini-batch size           & 128                   & -- \\
        Rollout $n$               & 4                     & -- \\
        Max. prompt length        & 1,024                 & -- \\
        Max. response length      & 2,048                 & 31,744 \\
        Samples per problem       & --                    & 32 \\
        Sampling temperature      & 1.0                   & 0.7 \\
        OPD support size Top-$K$  & 16                    & -- \\
        Top-$p$ sampling          & 1.0                   & 0.95 \\
        Learning rate             & $1\times10^{-6}$      & -- \\
        Training steps            & 300                   & -- \\
        KL coefficient $\alpha$   & Adaptive              & -- \\
        \quad Controller $\epsilon$ & 0.01                 & -- \\
        \quad $[\alpha_{\min}, \alpha_{\max}]$ & $[0.5,\, 2.5]$ & -- \\
        Checkpoint selection      & --                    & AIME 24/25 \\
        Held-out evaluation       & --                    & AIME 26, HMMT Nov.25 / Feb.26  \\
        \bottomrule
    \end{tabular}
\end{table}

\section{Limitations}
Our work has four main limitations.
First, JSD is a proxy for how much RL changed the teacher, not a judge of whether the change is correct.
Because it scores only the magnitude of the teacher--reference difference, S$^2$D-OPD retains a large shift toward an error as readily as a large correction (Fig.~\ref{fig:mask-error}), so the quality of the retained supervision still depends on the teacher's RL.
Selectors that also account for the direction or correctness of a shift, for example through verifier or outcome signals, could filter such states.
Second, JSD is blind to what the student needs.
The score depends on the student only through its top-$K$ candidate set: a state where the student already follows the teacher's shift consumes the same budget as one where it does not, and a low-divergence shift that the student lacks is discarded.
Combining teacher--reference divergence with student-side signals, in the spirit of the student-side calibration in Proxy-OPD~\citep{ProxyOPD}, is a natural extension.
Third, our evidence is limited in scale and scope.
We transfer from 1.5B teachers to students of up to 8B parameters, whereas weak-to-strong transfer is most valuable for much larger students, where RL is most costly.
All experiments use mathematical reasoning with verifiable rewards; whether selective supervision helps in code generation or agentic tasks, where RL-induced shifts may be distributed differently across states, remains untested.
Finally, owing to limited compute, each configuration is trained once, so our confidence interval reflects variation across held-out problems rather than across training seeds; repeated runs would strengthen the per-setting comparisons.

\end{document}